\documentclass[twoside]{article}

\usepackage[preprint]{aistats2027}
\usepackage{graphicx}
\usepackage{booktabs}
\usepackage{natbib}
\usepackage{amsfonts,amssymb,amsthm,mathtools,mathrsfs,amsmath}
\usepackage{xfrac,hyperref}
\usepackage[table]{xcolor}
\usepackage{tabularx}
\usepackage{longtable}
\usepackage{xcolor}
\usepackage{adjustbox}
\usepackage{array}
\usepackage{bbm}
\usepackage{placeins}
\theoremstyle{plain}
\newtheorem{theorem}{Theorem}
\newtheorem{lemma}{Lemma}

\theoremstyle{definition}
\newtheorem{definition}{Definition}

\theoremstyle{remark}

\newcommand{\sref}[1]{\S\ref{#1}}

\begin{document}

\twocolumn[

\aistatstitle {Divergence based Ranked Preference Alignment for LLMs}

\aistatsauthor{ Rajdeep Haldar \And Yue Xing \And  Qifan Song }

\aistatsaddress{Purdue University \And  Michigan State University \And Purdue University } ]


\begin{abstract}
Alignment of large language models typically relies on pairwise preference feedback, though ranked feedback can capture more comprehensive information across multiple responses within a single annotation.
However, ranked-preference alignment is underexplored in practice compared to simpler pairwise approaches, potentially resulting in suboptimal behavior. Concurrently, recent research has conceptualized large language model alignment objectives as divergence estimators between aligned (preferred) and unaligned (less-preferred) response distributions, resulting in a family of objectives with an interpretable theoretical foundation and strong empirical performance. This work unifies these perspectives by introducing $f$-RPO, a $f$-divergence objective for ranked preference optimization that extends divergence-based alignment to ranked feedback and recovers pairwise preference alignment as a special case. Our theoretical analysis establishes rank-order consistency, showing that $f$-RPO preserves the underlying preference ordering at convergence.
Empirical results indicate that total-variation and reverse-KL variants of the proposed objective consistently outperform ranked-alignment baselines in both ranking and generation quality across Qwen, Llama, \& Gemma models ranging from 0.5B to 8B parameters.
\end{abstract}

\section{Introduction}



Aligning large language models (LLMs) with human preferences is central to modern post-training, enabling pretrained models
to follow user intent, produce helpful responses, and satisfy
desired behavioral and safety requirements
\citep{ouyang2022training,bai2022constitutional}.
Current preference-alignment methods, however, predominantly
operate on human annotated \emph{pairwise} feedback, where one response is labeled as
preferred over another
\citep{rafailov2023direct,ethayarajh2024kto,meng2024simpo}.
Although pairwise comparisons provide a simple and effective
supervision signal, ranked feedback provides substantially more information within a single annotation by expressing relative preferences among several responses.
A naive approach is to decompose a ranking of $k$ responses into its
$\binom{k}{2}$ constituent pairwise comparisons. Such a reduction,
however, increases the number of training comparisons and discards the
structured nature of the original ranking. Statistically, one can interpret pairwise methods to assume a composite likelihood while the latter utilizes an informative joint likelihoods, thus the former one is inherently less efficient due to its ignorance of within-ranking information.

Ranked alignment methods therefore aim to extend binary preference optimization to ranked feedback
Mallows-based ranked preference optimization
\citep{yuan2023rrhf,song2024preference,tang2026beyond}.
These results demonstrate the promise of ranked preference alignment, but the setting remains considerably less explored than its convenient pairwise
counterpart.

In parallel, recent work has advanced a conceptual view of preference alignment: alignment objectives can be
interpreted as divergence estimators between an
\emph{aligned} distribution, represented by preferred responses, and
an \emph{unaligned} distribution, represented by less-preferred
responses
\citep{haldar2025llm,haldar2026f}.
This perspective provides a unified interpretation of existing
alignment objectives, yields strong theoretical guarantees, and naturally proposes new objectives by
varying the choice of divergence. Existing formulations, however,
are limited for binary preference feedback.
\begin{figure*}[!ht]
    \centering
    \includegraphics[width=\linewidth, trim= 0 1.1cm 0 0.5cm, clip]{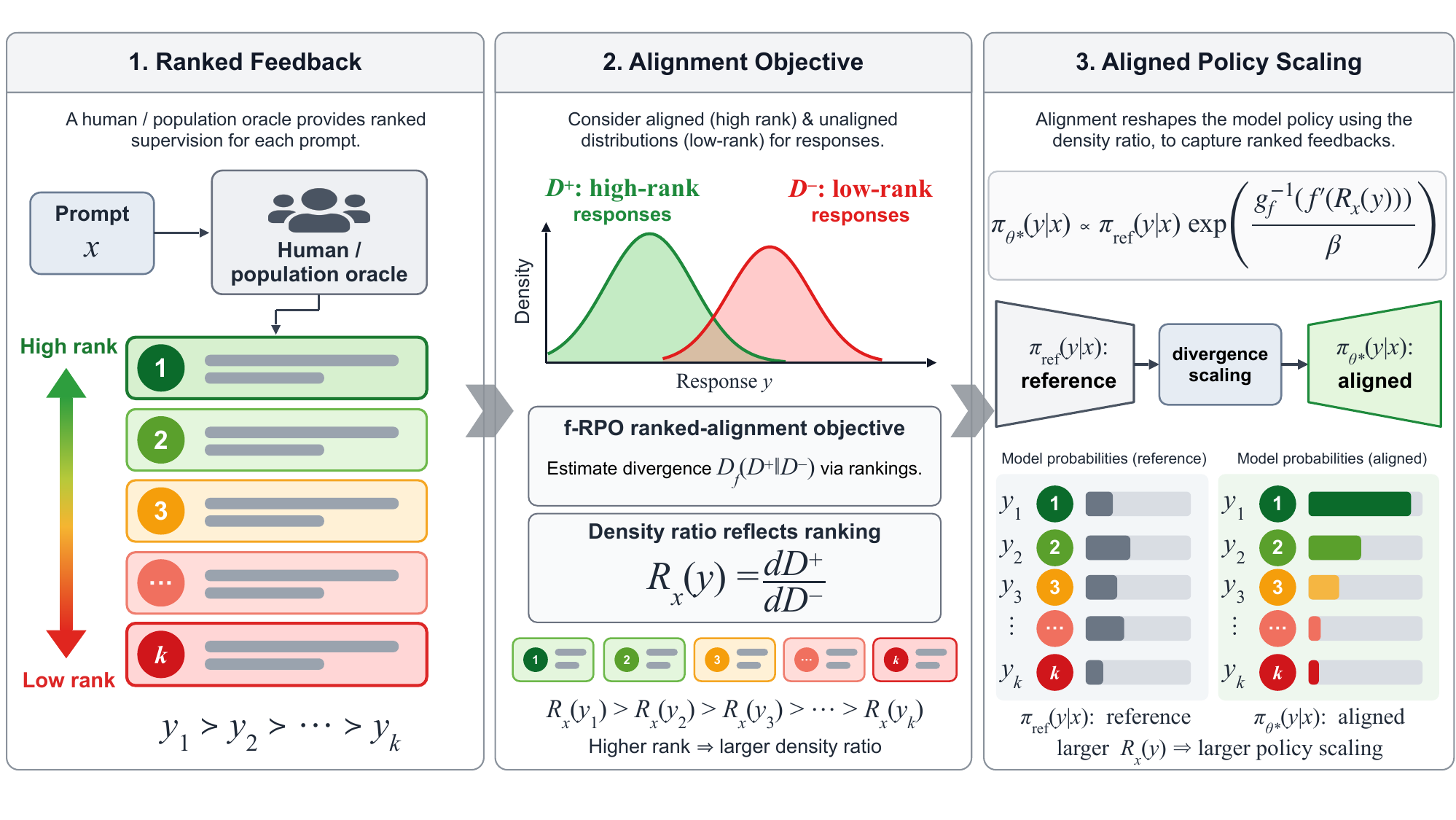}
    \caption{Overview of $f$-RPO. Ranked feedback implicitly induces aligned and unaligned response distributions. Alignment then scales the reference policy to reflect ranking.}
    \label{fig:framework}
\end{figure*}
In this work, we extend the divergence-based view of alignment to
ranked preferences and introduce \emph{ranked $f$-divergence
preference optimization} ($f$-RPO), a family of alignment objectives
for learning directly from ordered response lists. Our construction
generalizes the notions of aligned and unaligned response
distributions to ranked feedback: the ranked aligned distribution
emphasizes responses according to their role as higher-ranked
alternatives, whereas the ranked unaligned distribution emphasizes
their role as lower-ranked alternatives. Applying variational
$f$-divergence estimation to these distributions yields a family of
ranked alignment objectives indexed by the choice of $f$-divergence.
Figure~\ref{fig:framework} provides an overview of the framework.

We summarize our contributions as follows:
\paragraph{(1) Divergence Framework} We view alignment on ranked feedback as a divergence estimation problem between two contrasting distributions over prompt-response pairs. A ranked aligned distribution which favors sampling of high-rank responses vs a ranked unaligned one which favors sampling of low-rank responses for a given prompt. This view and the corresponding results generalize the recent divergence view established for binary preference alignment. (Refer \sref{sec: ranked_fdivergence optimiazation})
\paragraph{(2) New Ranked Alignment Objectives} Based on this framework, we propose a class of ranked alignment objectives, $f$-RPO, based on the choice of $f$-divergence. We show that theoretically, this family of objectives leads to an optimal policy that scales the reference policy to reflect ranking. Moreover, under mild conditions, they can preserve the true ranking of responses. (Refer \sref{sec:frpo-objective}, \sref{sec: theory})
\paragraph{(3) Experiments} Across Qwen, Llama, and Gemma models spanning multiple parameter
scales, reverse-KL and total-variation instantiations of $f$-RPO
consistently outperform strong ranked-alignment baselines in both
ranking quality and downstream generation quality \sref{sec: experiments}.
In addition to generation-based evaluation, we directly measure how
well aligned models recover the target response ordering using ranking
metrics such as Spearman correlation and top-response retrieval
accuracy, providing a complementary evaluation of ranked preference
alignment.
\section{Related Works}
\paragraph{Ranked/listwise alignment}
When more than two candidate responses are ranked for the same prompt, it can provide more preference information, potentially leading to better model alignment than pairwise preference alignment. However, it also presents new challenges, such as how to handle ties \citep{henderson2025modelling}, inconsistent ranking of LLM judges \citep{shi2025judging} and Condorcet cycle \citep{liu2026statistical}.
\paragraph{Preference data collection} 
The quality of preference data plays a crucial role in reward model training and LLM alignment. While the community has put a tremendous effort into curating large-scale open-source reference data sets \citep[e.g.,][]{kopf2023openassistant,ji2023beavertails},
AI feedback offers a cheap alternative to human annotation 
\cite{bai2022constitutionalaiharmlessnessai, cui2024ultra}.
Active preference learning frames preference data collection as a sequential experiment design problem \citep{Muldrew24active,xu2025rlthf}: given the current reward model estimation, it selects the prompt and response pair that can deliver the greatest information about the unknown preference function.  preference

\section{Background}
\label{sec:background}
\subsection{Preference Alignment}

Given a prompt $x$ and two responses $y_w$ and $y_l$, a preference
annotation indicates that $y_w \succ y_l$. Reinforcement Learning
with Human Feedback (RLHF)~\citep{ouyang2022training} first learns a
reward model $r_\phi(x,y)$, commonly using the Bradley--Terry model,
\[
    P(y_w\succ y_l)
    =
    \sigma\!\left(
        r_\phi(x,y_w)-r_\phi(x,y_l)
    \right),
\]
and subsequently optimizes the policy against this learned reward via reinforcement learning techniques.

\paragraph{Direct alignment and implicit reward.}
Direct preference optimization methods avoid explicitly training a
reward model and instead optimize the policy directly from preference
data. DPO~\citep{rafailov2024direct} uses the implicit reward
\begin{equation}
    r_\theta(x,y)
    =
    \beta
    \log
    \frac{\pi_\theta(y\mid x)}
         {\pi_{\mathrm{ref}}(y\mid x)},
    \label{eq:background-implicit-reward}
\end{equation}
where $\pi_{\mathrm{ref}}$ is a fixed reference policy and
$\beta>0$ controls the reward scale. The DPO objective is
\[
    \mathcal L_{\mathrm{DPO}}(\theta)
    =
    -\mathbb E
    \log
    \sigma\!\left(
        r_\theta(x,y_w)-r_\theta(x,y_l)
    \right).
\]
Several direct-alignment methods, including KTO
\citep{ethayarajh2024kto}, BCO~\citep{jung2024binary}, and
KLDO~\citep{haldar2025llm}, similarly operate through this
policy-relative reward parameterization.

\paragraph{Divergence-based alignment.}
Recent work provides a distributional view of alignment,
interpreting alignment objectives as variational $f$-divergence
estimation between an aligned response distribution $\mathcal D^+$
and an unaligned response distribution $\mathcal D^-$
\citep{haldar2025llm,haldar2026f}. For a $f$-divergence, the generic policy objective takes the form
\begin{equation}
\begin{aligned}
    \mathcal L_f(\theta)
    ={}&
    -\mathbb E_{(x,y)\sim\mathcal D^+}
    \left[g_f(r_\theta(x,y))\right]
    \\
    &+
    \mathbb E_{(x,y)\sim\mathcal D^-}
    \left[
        f^*\!\left(g_f(r_\theta(x,y))\right)
    \right],
\end{aligned}
\label{eq:background-divergence-alignment}
\end{equation}
where $f^*, g_f$ are the convex conjugate of $f$ and a monotone link function that maps the implicit reward to the effective domain of the variational critic. The above objective estimates the $f$-divergence or separation of the distributions at convergence, i.e., $\mathcal -L_f(\theta^*)= D_f(\mathcal{D^+}\|\mathcal{D^-})=\inf_g \mathbb E_{(x,y)\sim\mathcal D^+}g(x,y)+E_{(x,y)\sim\mathcal D^-}f^*\circ g(x,y)$. Different choices of $f$
induce different alignment objectives, while the construction of
$\mathcal D^+$ and $\mathcal D^-$ determines what constitutes aligned
and unaligned feedback. For pairwise preference alignment, these
distributions correspond to preferred and rejected responses. Moreover, direct alignment methods like KTO, BCO, and KLDO are special cases of the above objective with total variation, Jensen-Shannon, and KL divergences, respectively.
Additional background on variational $f$-divergence estimation,
canonical link functions, and existing divergence-based objectives is
provided in Appendix~\ref{app:divergence-alignment}.

\subsection{Ranked Preference Alignment}

Beyond pairwise feedback, a ranking over multiple candidate responses
contains additional relative preference information. A common approach
is to extend probabilistic ranking models to preference alignment. For
example, under the Plackett--Luce model,
\begin{equation}
    P\!\left(
        y_{(1)}\succ\cdots\succ y_{(k)}
    \right)
    =
    \prod_{i=1}^{k-1}
    \frac{
        \exp(r_\phi(x,y_{(i)}))
    }{
        \sum_{j=i}^{k}
        \exp(r_\phi(x,y_{(j)}))
    }.
    \label{eq:pl-ranked-preference}
\end{equation}
As in DPO, one can obtain direct alignment objectives by replacing the
explicit reward model in Eq.~\eqref{eq:pl-ranked-preference} with the implicit reward ~\eqref{eq:background-implicit-reward}. RCPO
\citep{tang2026beyond} generalizes this idea to broader probabilistic
rank-choice models, including Mallows-type models, while
RRHF~\citep{yuan2023rrhf} directly penalizes disagreement between the
observed ranking and model likelihoods.

Our goal is to combine these two perspectives: retain the richer
supervision of ranked feedback while extending the divergence-based
alignment framework beyond pairwise preferences.

\section{Ranked $f$-Divergence Preference Optimization}
\label{sec: ranked_fdivergence optimiazation}
\subsection{Ranked Preference Model}
\label{sec:framework}
Let $\mathcal X$ and $\mathcal Y$ denote the prompt and response spaces,
respectively. For each prompt $x\sim p_{\mathcal X}$, consider $k\ge 2$
candidate responses $y_1,\ldots,y_k$ sampled independently from a response
distribution $\mathcal D_x$, given prompt $x$ with corresponding density $p_x(y)$.
We assume the existence of an oracle utility function
$u_x:\mathcal Y\to\mathbb R$ that determines the preference order \citep{thurstone1994law,plackett1975analysis}, i.e.,
\begin{equation}
    y\succ_x y'
    \iff
    u_x(y)>u_x(y').
    \label{eq:utility-ranking}
\end{equation}
Sorting the candidates in decreasing order of utility yields
$y_{(1)}\succ_x\cdots\succ_x y_{(k)}$.

Let $F_x(t)=\Pr_{y\sim\mathcal D_x}(u_x(y)\le t)$ denote the CDF of
the utility under the base response distribution. Assuming a continuous
utility distribution, the response occupying rank $i$ follows a distribution $\mathcal D_x^i$, derived from the ordered statistics of responses, with density:
\begin{equation}
\begin{aligned}
p_x^i(y)
&=
\frac{k!}{(i-1)!(k-i)!}\,p_x(y)
\\[-1mm]
&\quad\times
F_x(u_x(y))^{k-i}
\bigl[1-F_x(u_x(y))\bigr]^{i-1}.
\end{aligned}
\label{eq:rank-conditional-density}
\end{equation}
The above equation is a standard ordered-statistic density (with details provided
in Appendix~\ref{app:rank-conditional-density}).

\subsection{Ranked Aligned and Unaligned Distributions}
\label{sec:ranked-distributions}

Building on the divergence-based alignment formulation in
Section~\ref{sec:background}, we construct aligned and unaligned
response distributions from ranked feedback.

We first consider the augmented space
\[
    \mathcal Z
    =
    \mathcal X\times\{1,\ldots,k-1\}\times\mathcal Y.
\]
For a rank index $i$, the response occupying rank $i$ is preferred
relative to any response at a lower rank $j>i$. Let $\rho_x(i)$ be a
distribution over rank indices $i\in\{1,\ldots,k-1\}$, and let
$q_x(j\mid i)$ be a distribution over lower ranks
$j\in\{i+1,\ldots,k\}$. 
$\rho_x$ and $q_x$ are user-specified distributions, guiding the construction of aligned and unaligned response distributions. 
We define $\widetilde{\mathcal D}^{\pm}$ with densities:
\begin{align}
    d\widetilde{\mathcal D}^{+}(x,i,y)
    &=
    p_{\mathcal X}(x)\,
    \rho_x(i)\,p_x^i(y),
    \label{eq:rank-aligned-augmented}
    \\
    d\widetilde{\mathcal D}^{-}(x,i,y)
    &=
    p_{\mathcal X}(x)\,
    \rho_x(i)
    \sum_{j=i+1}^{k}
    q_x(j\mid i)\,p_x^j(y).
    \label{eq:rank-unaligned-augmented}
\end{align}
Thus, conditioned on $i$, $\widetilde{\mathcal D}^{+}$ draws the
response at rank $i$, whereas $\widetilde{\mathcal D}^{-}$ draws a
lower-ranked alternative according to $q_x(\cdot\mid i)$.

Our policy $\pi_\theta$ depends only on $(x,y)$, so we marginalize $\widetilde{\mathcal D}^{\pm}$ over the rank index to obtain the corresponding aligned/unaligned distribution of interest $\mathcal{D}^\pm$
\begin{align}
    d\mathcal D^{+}(x,y)
    &=
    \sum_{i=1}^{k-1}
    \widetilde{\mathcal D}^{+}(x,i,y)
    =
    p_{\mathcal X}(x)
    \sum_{i=1}^{k-1}
    \rho_x(i)p_x^i(y),
    \label{eq:rank-aligned-dist}
    \\
    d\mathcal D^{-}(x,y)
    &=
    \sum_{i=1}^{k-1}
    \widetilde{\mathcal D}^{-}(x,i,y)
    \nonumber\\
    &=
    p_{\mathcal X}(x)
    \sum_{i=1}^{k-1}
    \rho_x(i)
    \sum_{j=i+1}^{k}
    q_x(j\mid i)p_x^j(y).
    \label{eq:rank-unaligned-dist}
\end{align}

Hence, $\mathcal D^{+}$ represents responses sampled according to their
role as preferred responses in the ranking, while $\mathcal D^{-}$
represents responses sampled according to their role as lower-ranked
alternatives. The choices of $\rho_x$ and $q_x$ determine how different
rank comparisons contribute to these two distributions.
\subsection{$f$-RPO Objective}
\label{sec:frpo-objective}

We now apply the divergence-based alignment objective in
Eq.~\eqref{eq:background-divergence-alignment} to the ranked aligned
and unaligned distributions $\mathcal D^+$ and $\mathcal D^-$ defined
above. Using the implicit reward from
Eq.~\eqref{eq:background-implicit-reward}, we obtain our
\emph{ranked $f$-divergence preference optimization} ($f$-RPO)
objective:
\begin{equation}
\begin{split}
    \mathcal L_{f\text{-RPO}}(\theta)
    =
    -&\mathbb E_{x,\mathbf y_{(1:k)}}
    \sum_{i=1}^{k-1}\rho_x(i)
    \Bigg[
        g_f(r_{\theta,i})
        \\
        &\qquad -
        \sum_{j=i+1}^{k}
        q_x(j\mid i)
        f^*\!\left(g_f(r_{\theta,j})\right)
    \Bigg],
    \label{eq:frpo-loss}
\end{split}
\end{equation}
where
$\mathbf y_{(1:k)}=(y_{(1)},\ldots,y_{(k)})$
are the ranked responses and
$r_{\theta,i}:=r_\theta(x,y_{(i)})$ is the implicit reward Eq. \eqref{eq:background-implicit-reward}.

\begin{table}[!ht]
\centering
\caption{
Realization of $f$-RPO objective for TV and RKL divergence.
}
\label{tab:main-frpo-objectives}
\small
\setlength{\tabcolsep}{4pt}

\resizebox{\columnwidth}{!}{%
\begin{tabular}{lc}
\toprule
Divergence & $\mathcal L_{f\text{-RPO}}(\theta)$ \\
\midrule

Reverse KL
&
$\displaystyle
\mathbb E_{x,\mathbf y_{(1:k)}}
\sum_{i=1}^{k-1}\rho_x(i)
\left[
e^{-r_{\theta,i}}
+
\sum_{j=i+1}^{k}
q_x(j\mid i)\,r_{\theta,j}
\right]
$
\\[3mm]

Total variation
&
$\displaystyle
\frac{1}{2}
\mathbb E_{x,\mathbf y_{(1:k)}}
\sum_{i=1}^{k-1}\rho_x(i)
\left[
-\sigma(r_{\theta,i})
+
\sum_{j=i+1}^{k}
q_x(j\mid i)\sigma(r_{\theta,j})
\right]
$
\\

\bottomrule
\end{tabular}%
}
\end{table}
The first term rewards responses selected from the preferred ranks,
while the second penalizes responses sampled from their lower-ranked
alternatives. The weighting
distributions $\rho_x$ and $q_x$ determine how the ranked feedback
contributes to the aligned and unaligned terms of the divergence
objective. In our implementation we choose $\rho_x(i)\propto k-i$ and
$q_x(j\mid i)\propto j-i$, this choice will be justified in \sref{sec: theory}, Theorem \ref{thm:rank-consistency}. Table \ref{tab:main-frpo-objectives} provides objectives for total variation (TV) and reverse-KL (RKL) divergence which show the best empirical performances. Alignment objectives for other divergences and their empirical performance w.r.t. to TV, RKL are provided in Table \ref{tab:frpo-all-objectives} and \sref{sec: fdiv ablation}

\section{Theoretical Properties}
\label{sec: theory}

\begin{definition}[Density Ratio]
\label{defn: density_ratio}
    Let $R_x(y)=\frac{d\mathcal D^+}{d\mathcal D^-}(x,y)$ be the aligned-to-unaligned density ratio. Where $d\mathcal D^\pm$ are defined in  Eqns. \eqref{eq:rank-aligned-dist}, \eqref{eq:rank-unaligned-dist}.
\end{definition}
This quantity can be interpreted as how likely a response $y$ belongs to a higher rank (aligned) rather than a lower rank (unaligned), for a prompt $x$.
\begin{definition}[Rank favored weights]
\label{defn: rank_favor_wts}
    Let the aligned and unaligned masses associated with rank $i$ in Eqns. \eqref{eq:rank-aligned-dist}, \eqref{eq:rank-unaligned-dist} be
\begin{equation}
    B_i
    =
    \rho_x(i)\mathbf{1}\{i<k\},
    \qquad
    A_i
    =
    \sum_{\ell<i}
    \rho_x(\ell)q_x(i\mid\ell).
    \label{eq:rank-masses}
\end{equation}
The aligned/unaligned distributions $\mathcal{D}^\pm$ are called to have \emph{rank favored weights} iff 
\begin{equation}
    B_iA_j\ge B_jA_i,
    \qquad i<j.
    \label{eq:rank-weight-condition}
\end{equation}
\end{definition}
Here, $B_i$ is the probability that rank $i$ contributes as a
preferred response, while $A_i$ is its aggregate probability of
contributing as a lower-ranked alternative. The ranked favored weight condition essentially enforces the contribution of higher ranks contributes more to being a preferred resonse than a lower ranked one ($B_i/B_j\geq A_i/A_j$ for non-zero denominators).
The choice of $\rho_x(i)\propto k-i$ and
$q_x(j\mid i)\propto j-i$ enforce the above condition (\sref{app:rank-weight-condition}). 
\begin{lemma}[Density ratio reflects utility]
\label{lem:density_ratio_util}
Suppose the aligned and unaligned distribution has rank favored weights (Defn. \ref{defn: rank_favor_wts}), then $u_x(y)>u_x(y')$ implies that $R_x(y)\geq R_x(y')$, for all $y,y'$ for a given prompt $x$. 
\end{lemma}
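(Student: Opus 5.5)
Fix a prompt $x$ and write $t=F_x(u_x(y))\in[0,1]$. The plan is to show that $R_x(y)$ depends on $y$ only through $t$, and that this function of $t$ is nondecreasing; since $F_x$ is nondecreasing, $u_x(y)>u_x(y')$ gives $t\ge t'$ and hence $R_x(y)\ge R_x(y')$.

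\textbf{Rewriting the ratio.} Substituting the order-statistic density \eqref{eq:rank-conditional-density} into \eqref{eq:rank-aligned-dist} and \eqref{eq:rank-unaligned-dist}, the common factors $p_{\mathcal X}(x)\,p_x(y)$ cancel. Exchanging the order of summation in the unaligned density regroups it by the lower rank $j$. This gives
\begin{equation*}
R_x(y)=\frac{\sum_{i=1}^{k} B_i\, c_i\, t^{k-i}(1-t)^{i-1}}{\sum_{i=1}^{k} A_i\, c_i\, t^{k-i}(1-t)^{i-1}},\qquad c_i=\frac{k!}{(i-1)!(k-i)!},
\end{equation*}
with $B_i,A_i$ as in \eqref{eq:rank-masses}. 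Note that $B_k=0$ and $A_1=0$.

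\textbf{Monotonicity.} On $t\in(0,1)$, set $s=t/(1-t)$, which is increasing in $t$. Dividing numerator and denominator by $(1-t)^{k-1}$ turns the ratio into
\begin{equation*}
\frac{\sum_m b_m s^m}{\sum_m a_m s^m},\qquad m=k-i,\quad b_m=c_iB_i,\quad a_m=c_iA_i .
\end{equation*}
For $m<m'$ we have $i>j$ (with $i=k-m$, $j=k-m'$). Condition \eqref{eq:rank-weight-condition} for the pair $j<i$ reads $B_jA_i\ge B_iA_j$, which after multiplying by $c_ic_j$ gives $b_{m'}a_m\ge b_m a_{m'}$. So the coefficient sequences satisfy the monotone-likelihood-ratio condition. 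Differentiating,
\begin{equation*}
\frac{d}{ds}\frac{N}{D}=\frac{N'D-ND'}{D^2},\qquad N'D-ND'=\sum_{m,m'} (m-m')\,b_m a_{m'}\, s^{m+m'-1}.
\end{equation*}
Pairing the $(m,m')$ and $(m',m)$ terms gives $\sum_{m>m'}(m-m')\,(b_m a_{m'}-b_{m'}a_m)\,s^{m+m'-1}\ge 0$. This is the classical fact that a ratio of power series with MLR-ordered coefficients is monotone, equivalently a Chebyshev-sum inequality. Hence $R_x$ is nondecreasing in $s$, and therefore in $t$.

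\textbf{Main obstacle: degenerate cases.} The substantive issue is handling degenerate cases cleanly.
\begin{itemize}
\item \emph{Endpoints} $t\in\{0,1\}$. Here I would argue by continuity of the rational function, taking its limits and allowing the value $+\infty$ where the denominator vanishes. For example, at $t=1$ only $i=1$ survives, with $A_1=0$, so $R=+\infty$, which is consistent with monotonicity.
\item \emph{Zero coefficients.} When some $A_i$ or $B_i$ vanish, the condition $B_iA_j\ge B_jA_i$ must still imply the pairwise inequality used above. It does so directly, since no division was performed.
\item \emph{Ties in $t$.} If $u_x(y)>u_x(y')$ but $F_x$ is flat between them, then $t=t'$ and equality $R_x(y)=R_x(y')$ holds. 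This is why the statement only claims $\ge$.
\end{itemize}
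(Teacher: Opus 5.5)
Your proof is correct, but it takes a different route from the paper's. The paper never reduces $R_x$ to a function of one variable. It writes $d\mathcal D^\pm(\cdot\mid x)$ as the mixtures $\sum_i B_i p_x^i$ and $\sum_i A_i p_x^i$. For two fixed responses $y,y'$ it then expands the finite cross-difference $d\mathcal D^+(y\mid x)\,d\mathcal D^-(y'\mid x)-d\mathcal D^+(y'\mid x)\,d\mathcal D^-(y\mid x)$ as $\sum_{i<j}(B_iA_j-B_jA_i)\bigl[p_x^i(y)p_x^j(y')-p_x^i(y')p_x^j(y)\bigr]$. Each factor is nonnegative, by the rank-favored condition and by the two-point inequality $p_x^i(y)p_x^j(y')\ge p_x^i(y')p_x^j(y)$ for $i<j$.

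You use the same pairing trick, but on the derivative $N'D-ND'$ of a rational function of $s=t/(1-t)$. Your argument is essentially the infinitesimal version of the paper's two-point one. What your route buys is an explicit formula showing that $R_x(y)$ depends on $y$ only through $F_x(u_x(y))$. What it costs is that you need differentiability, positivity of $D$ on $(0,\infty)$ (true because $\sum_i A_i=1$, which you should state), and a separate treatment of $t\in\{0,1\}$. The paper's comparison needs none of this, since the two-point inequality reduces to $t(1-t')-t'(1-t)=t-t'\ge 0$, valid on all of $[0,1]$, boundaries included.

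There is one small slip in your endpoint discussion. At $t=1$ the numerator is $c_1B_1=c_1\rho_x(1)$, so $R=+\infty$ only when $\rho_x(1)>0$; if $\rho_x(1)=0$ you get $0/0$. Your limit convention resolves this, and so does the observation that $\{t\in\{0,1\}\}$ is a null set when the utility distribution is continuous.
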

The above lemma says that the density ratio reflects the true ranking or underlying utility if the (un) aligned distributions have ranked favored weights.
\subsection{Divergence Estimation and Alignment}
\label{sec:divergence-estimation}
In our next theorem we express the optimal policy post-alignment in terms of this ratio. Moreover, theorem \ref{thm:rank-consistency} shows that

\begin{theorem}[Divergence estimation and optimal policy]
\label{thm:divergence-alignment}
The ranked alignment objective $\mathcal L_{f\text{-RPO}}(\theta)$ in Eq. \eqref{eq:frpo-loss} satisfies the following at optimality:
\begin{equation}
    -\mathcal L_{f\text{-RPO}}(\theta^*)
    =
    D_f(\mathcal D^+\|\mathcal D^-),
    \label{eq:frpo-divergence-optimum}
\end{equation}
with optimal policy
\begin{equation}
    \pi_{\theta^*}(y\mid x)      
    \propto
    \pi_{\mathrm{ref}}(y\mid x)\cdot\exp\left(
        \frac{g_f^{-1}(f'(R_x(y)))}{\beta}
    \right)
    \label{eq:optimal-frpo-policy}
\end{equation}
where, $\theta^*=\arg\inf\limits_\theta \mathcal L_{f\text{-RPO}}(\theta)$.
\end{theorem}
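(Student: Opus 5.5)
The plan is to reduce the $f$-RPO objective to the generic variational divergence objective in Eq.~\eqref{eq:background-divergence-alignment} with $\mathcal D^\pm$ given by Eqns.~\eqref{eq:rank-aligned-dist}, \eqref{eq:rank-unaligned-dist}, and then invoke the standard variational characterization of $D_f$.

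\textbf{Step 1: rewrite the loss as expectations under $\mathcal D^\pm$.}
The ranked tuple $\mathbf y_{(1:k)}$ is the order statistic of $k$ i.i.d.\ draws from $\mathcal D_x$. Hence the marginal law of $y_{(i)}$ given $x$ has density $p_x^i$ from Eq.~\eqref{eq:rank-conditional-density}. For any measurable $h$,
\[
\mathbb E_{x,\mathbf y_{(1:k)}} h(x,y_{(i)})=\mathbb E_{x}\int h(x,y)\,p_x^i(y)\,dy .
\]
Apply this linearly to each term of Eq.~\eqref{eq:frpo-loss}, using only marginals (the joint dependence among the $y_{(i)}$ is irrelevant because the loss is additive). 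This gives
\[
\mathcal L_{f\text{-RPO}}(\theta)=-\mathbb E_{\mathcal D^+}\bigl[g_f(r_\theta)\bigr]+\mathbb E_{\mathcal D^-}\bigl[f^*(g_f(r_\theta))\bigr].
\]
The first term uses $\sum_i\rho_x(i)p_x^i=d\mathcal D^+/p_{\mathcal X}$. The second uses the double sum $\sum_i\rho_x(i)\sum_{j>i}q_x(j\mid i)p_x^j=d\mathcal D^-/p_{\mathcal X}$. Both measures are probability measures, since $\rho_x$ and $q_x(\cdot\mid i)$ are normalized.

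\textbf{Step 2: variational bound and its attainment.}
Set $T=g_f\circ r_\theta$. By Fenchel--Young, $f(R)\ge RT-f^*(T)$ pointwise, with equality iff $T\in\partial f(R)$, i.e.\ $T=f'(R_x(y))$ for differentiable $f$. Integrate against $d\mathcal D^-$, using $d\mathcal D^+=R\,d\mathcal D^-$, where $R_x$ is the density ratio of Definition~\ref{defn: density_ratio}. This yields $-\mathcal L_{f\text{-RPO}}(\theta)\le D_f(\mathcal D^+\|\mathcal D^-)$ for every $\theta$. Equality holds iff $g_f(r_\theta(x,y))=f'(R_x(y))$ a.e.

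Assume, as is standard in this line of work, that the policy class is rich enough (nonparametric) for this critic to be realizable. Then the infimum of $\mathcal L$ is attained at $\theta^*$ and equals $-D_f$, which gives Eq.~\eqref{eq:frpo-divergence-optimum}.

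\textbf{Step 3: read off the optimal policy.}
Invert the link. Since $g_f$ is monotone and hence injective on its range, $r_{\theta^*}(x,y)=g_f^{-1}(f'(R_x(y)))$. Substituting the implicit reward from Eq.~\eqref{eq:background-implicit-reward} gives
\[
\beta\log\frac{\pi_{\theta^*}}{\pi_{\mathrm{ref}}}=g_f^{-1}(f'(R_x(y))).
\]
Exponentiating yields Eq.~\eqref{eq:optimal-frpo-policy}, with the $x$-dependent normalizing constant absorbed into $\propto$.

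\textbf{Main obstacle.} The delicate step is this last one. The implicit reward is a log-ratio of normalized policies, so $r_\theta$ cannot be an arbitrary function: it is constrained by $\mathbb E_{\pi_{\mathrm{ref}}}e^{r_\theta/\beta}=1$. The exact critic $f'(R)$ is therefore reachable only up to this normalization. I would handle this in one of two ways:
\begin{itemize}
\item allow a free $x$-dependent additive offset, as in the binary divergence papers \citep{haldar2025llm,haldar2026f}, which is precisely what "$\propto$" encodes; or
\item state realizability as an assumption, namely that $g_f^{-1}(f'(R_x))$ is an achievable reward.
\end{itemize}
I would also make explicit that $f'(R_x(y))$ must lie in the range of $g_f$ (the effective domain of $f^*$), which the canonical links are designed to ensure.
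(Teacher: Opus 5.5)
Your proposal is correct and follows the paper's proof essentially step for step: rewrite Eq.~\eqref{eq:frpo-loss} as the variational objective under $\mathcal D^\pm$ (the paper's appendix derivation), apply the Fenchel--Young variational lemma, assume exact realizability of $T^*=f'(R_x)$ by $g_f\circ r_\theta$, then invert the link and exponentiate. Your normalization concern is legitimate, and the paper handles it exactly as in your second option, by assuming realizability (so the $\propto$ in fact holds with constant one); your first option would not rescue the statement as written, because Eq.~\eqref{eq:frpo-loss} is not invariant to shifting $r_\theta$ by $c(x)$, so attaining $D_f$ requires $g_f(r_{\theta^*})=f'(R_x)$ exactly, and an offset helps only if it is added to the objective as an extra free parameter.
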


Theorem~\ref{thm:divergence-alignment} gives two complementary
interpretations of $f$-RPO: the objective estimates the divergence
between the ranked aligned and unaligned distributions, while the
optimal policy reweights the reference policy according to the
aligned-to-unaligned density ratio. Since $f'$ is nondecreasing for
convex $f$ and $g_f$ is monotone (Table \ref{tab:frpo-all-objectives}), larger values of $R_x(y)$ receive larger implicit rewards.
\subsection{Rank-Order Consistency}
\label{sec:rank-consistency}

We next show that the optimal solution of
Eq.~\eqref{eq:optimal-frpo-policy} is \emph{rank-order consistent}.
In particular, the implicit reward preserves the underlying utility
ordering, and the resulting policy preserves the same ordering under
a sufficiently small reference-policy bias ($\beta$).

\begin{theorem}[Rank-order consistency]
\label{thm:rank-consistency}
Suppose the aligned and unaligned distributions
$\mathcal D^\pm$ have rank-favored weights
(Definition~\ref{defn: rank_favor_wts}) and the conditions of
Theorem~\ref{thm:divergence-alignment} hold.

First, the optimal implicit reward preserves the underlying utility
ordering:
\begin{equation}
    u_x(y)>u_x(y')
    \quad\Longrightarrow\quad
    r_{\theta^*}(x,y)
    \ge
    r_{\theta^*}(x,y').
    \label{eq:utility-reward-order}
\end{equation}
Consequently, for any observed ranking,
\begin{equation}
    r_{\theta^*}(x,y_{(1)})
    \ge
    r_{\theta^*}(x,y_{(2)})
    \ge
    \cdots
    \ge
    r_{\theta^*}(x,y_{(k)}).
    \label{eq:frpo-rank-consistency}
\end{equation}

Moreover, suppose
$\pi_{\mathrm{ref}}(y_{(i)}\mid x)>0$
for all prompts $x$ and ranks $i$. For adjacent ranks, define
\[
    \Delta_{x,i}
    :=
    g_f^{-1}\!\left(f'(R_x(y_{(i)}))\right)
    -
    g_f^{-1}\!\left(f'(R_x(y_{(i+1)}))\right),
\]
and
\[
    b_{x,i}
    :=
    \log
    \frac{
        \pi_{\mathrm{ref}}(y_{(i+1)}\mid x)
    }{
        \pi_{\mathrm{ref}}(y_{(i)}\mid x)
    }.
\]
Then the optimal policy preserves the ranked order,
\[
    \pi_{\theta^*}(y_{(1)}\mid x)
    \ge
    \cdots
    \ge
    \pi_{\theta^*}(y_{(k)}\mid x),
    \qquad \forall x,
\]
whenever
\begin{equation}
    \beta
    \le
    \inf_{x,\;i:\,b_{x,i}>0}
    \frac{\Delta_{x,i}}{b_{x,i}},
    \label{eq:beta-rank-condition}
\end{equation}
provided the right-hand side is positive.
If $b_{x,i}\le0$ for all $(x,i)$, the policy ordering holds for any
$\beta>0$.
\end{theorem}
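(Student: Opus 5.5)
The proof chains Lemma~\ref{lem:density_ratio_util} through the optimal-policy characterization of Theorem~\ref{thm:divergence-alignment}. The plan has three steps.

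\textbf{Step 1: write the optimal implicit reward explicitly.} From Eq.~\eqref{eq:optimal-frpo-policy} and the definition Eq.~\eqref{eq:background-implicit-reward}, taking logarithms gives
\[
r_{\theta^*}(x,y)=\beta\log\frac{\pi_{\theta^*}(y\mid x)}{\pi_{\mathrm{ref}}(y\mid x)}=g_f^{-1}\!\left(f'(R_x(y))\right)-\beta\log Z(x).
\]
Here $Z(x)$ is the normalizing constant, which does not depend on $y$. Equivalently, the pointwise optimality of the variational critic gives $g_f(r_{\theta^*})=f'(R_x)$ up to this prompt-only shift.

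Consider the composition $y\mapsto g_f^{-1}(f'(R_x(y)))$. The map $f'$ is nondecreasing because $f$ is convex. The link $g_f$ is monotone increasing, as Table~\ref{tab:frpo-all-objectives} shows for each divergence used, so $g_f^{-1}$ is nondecreasing on its range. Hence the composition is a nondecreasing function of $R_x(y)$.

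\textbf{Step 2: utility ordering of the reward.} Lemma~\ref{lem:density_ratio_util} says that under rank-favored weights, $u_x(y)>u_x(y')$ implies $R_x(y)\ge R_x(y')$. Composing with the nondecreasing map from Step~1 gives Eq.~\eqref{eq:utility-reward-order}. The shift $-\beta\log Z(x)$ cancels because $y$ and $y'$ share the same prompt. Eq.~\eqref{eq:frpo-rank-consistency} then follows by applying this to consecutive pairs $y_{(i)},y_{(i+1)}$, which satisfy $u_x(y_{(i)})>u_x(y_{(i+1)})$ almost surely under a continuous utility distribution. In particular, $\Delta_{x,i}\ge0$.

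\textbf{Step 3: ordering of the policy.} From Eq.~\eqref{eq:optimal-frpo-policy}, and using $\pi_{\mathrm{ref}}>0$ on the ranked responses,
\[
\log\frac{\pi_{\theta^*}(y_{(i)}\mid x)}{\pi_{\theta^*}(y_{(i+1)}\mid x)}=\frac{\Delta_{x,i}}{\beta}-b_{x,i}.
\]
The policy is therefore ordered at adjacent ranks if and only if $\Delta_{x,i}\ge\beta b_{x,i}$. Chaining the adjacent inequalities gives the full chain.

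Split by the sign of $b_{x,i}$:
\begin{itemize}
\item If $b_{x,i}\le0$, the inequality holds for every $\beta>0$, because $\Delta_{x,i}\ge0$ by Step~2. This also covers the final claim of the theorem.
\item If $b_{x,i}>0$, the inequality is equivalent to $\beta\le\Delta_{x,i}/b_{x,i}$. This is guaranteed uniformly in $(x,i)$ by the infimum condition Eq.~\eqref{eq:beta-rank-condition}.
\end{itemize}
Positivity of the right-hand side is needed only so that an admissible $\beta>0$ exists.

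\textbf{Main obstacle.} Step~1 is the one delicate point. Inverting the optimality condition requires $g_f^{-1}$ to be well defined and order-preserving on the range of $f'(R_x)$. It also requires handling the normalization, so that the reward equals $g_f^{-1}(f'(R_x))$ only up to a $y$-independent constant. I would make this precise by invoking the pointwise characterization of the optimal critic from Theorem~\ref{thm:divergence-alignment}, and noting that any prompt-dependent constant cancels in every comparison made above.
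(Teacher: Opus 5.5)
Your proposal is correct and follows essentially the same route as the paper's proof. You compose Lemma~\ref{lem:density_ratio_util} with the nondecreasing map $g_f^{-1}\circ f'$ to order the implicit rewards, then use the adjacent log-odds identity $\log\bigl(\pi_{\theta^*}(y_{(i)}\mid x)/\pi_{\theta^*}(y_{(i+1)}\mid x)\bigr)=\Delta_{x,i}/\beta-b_{x,i}$, split on the sign of $b_{x,i}$, and chain the adjacent inequalities; your explicit tracking of the prompt-only normalizer $Z(x)$ and your requirement that $g_f$ be increasing (not merely monotone) are minor refinements of the paper's argument, which takes $r_{\theta^*}=g_f^{-1}(f'(R_x))$ exactly from the proof of Theorem~\ref{thm:divergence-alignment}.
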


The proof is provided in \sref{app:proof-rank-consistency}.
In our experiments, we use
$\rho_x(i)\propto k-i$ and
$q_x(j\mid i)\propto j-i$.
The former accounts for the number of lower-ranked alternatives
available to rank $i$, while the latter places greater weight on
comparisons with larger rank separation. These weights satisfy
\emph{rank favored} condition Eq.~\eqref{eq:rank-weight-condition}.

Theorem~\ref{thm:rank-consistency} also quantifies how the
regularization parameter $\beta$ controls the influence of reference
policy bias. If the reference policy already respects the desired
ranking or is uniform across all responses, i.e., $b_{x,i}\le0$, then the aligned policy preserves the ranking
for any $\beta>0$. More generally, whenever the alignment-induced
score gaps $\Delta_{x,i}$ are positive, a sufficiently small $\beta$
can overcome any finite opposing bias from the reference policy.
\begin{figure*}[!ht]
  \centering
  \includegraphics[width=\textwidth, trim= 2cm 0cm 0cm 0cm, clip]{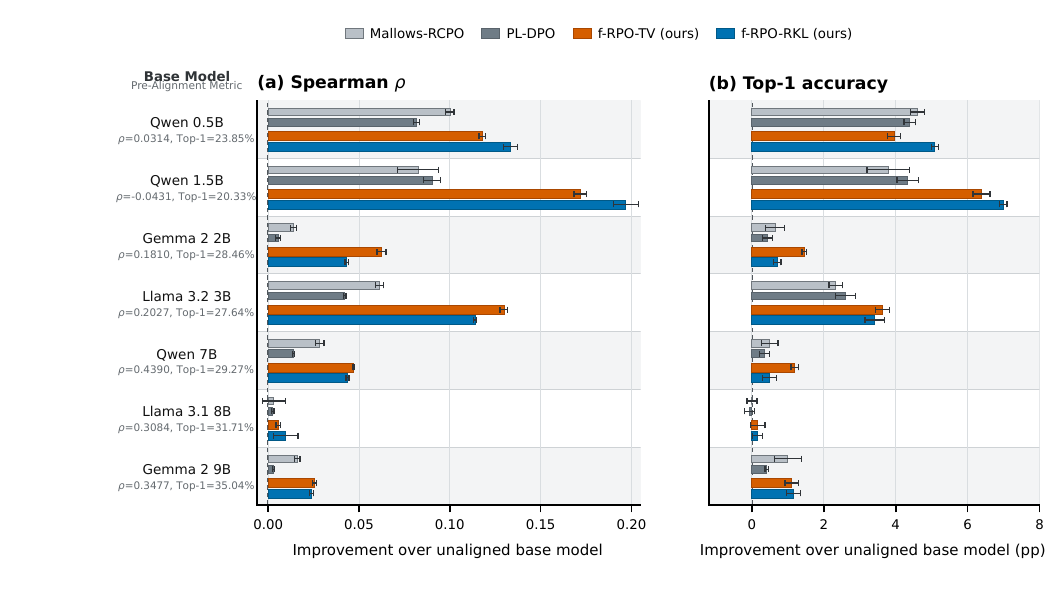}
  \caption{comparison of ranked alignment methods.
  We report improvement over the corresponding unaligned base model
  in Spearman correlation (left) and Top-1 accuracy (right).
  Error bars denote $\pm$ one standard error over five training seeds.}
  \label{fig:cross-family}
\end{figure*}
This behavior is consistent with the role of $\beta$ in the standard
KL-regularized alignment objective, where $\beta$ controls the
strength of regularization toward $\pi_{\mathrm{ref}}$. As
$\beta\to0$, the exponential reweighting induced by the alignment
signal increasingly dominates the finite reference-policy odds.
Consequently, subject to positive reference support, the ordering of
the optimal policy is governed primarily by the learned ranked
preference signal rather than by the reference policy.

\subsection{Reduction to Pairwise Preference Optimization}
\label{sec:binary-reduction}

When $k=2$, there is only one rank comparison, with
$\rho_x(1)=q_x(2\mid1)=1$ trivially. Writing
$y^+=y_{(1)}$ and $y^-=y_{(2)}$, the aligned and unaligned
distributions reduce to the preferred and rejected response
distributions, respectively. Equation~\eqref{eq:frpo-loss} becomes
\begin{equation}
\begin{aligned}
\mathcal L_{f\text{-RPO}}(\theta)
={}&-
\mathbb E_{(x,y^+)\sim\mathcal D^+}
\!\left[g_f(r_\theta(x,y^+))\right]
\\
&+
\mathbb E_{(x,y^-)\sim\mathcal D^-}
\!\left[
f^*\!\left(g_f(r_\theta(x,y^-))\right)
\right].
\end{aligned}
\label{eq:frpo-binary-reduction}
\end{equation}
This is exactly the binary divergence-based alignment objective
Eq.~\eqref{eq:background-divergence-alignment} proposed by \cite{haldar2025llm}. Hence, pairwise
preference alignment is recovered as the $k=2$ special case of
$f$-RPO.

\section{Experiments}
\label{sec: experiments}
We evaluate $f$-RPO against two strong ranked-alignment baselines:
PL-DPO, the Plackett--Luce extension of DPO
\citep{song2024preference}, and Mallows-RCPO
\citep{tang2026beyond}, which combines the implicit DPO reward with a
distance-based ranked-choice model. We additionally compare ranked
alignment against standard pairwise DPO in
\sref{sec:pvsr} to isolate the benefit of learning directly
from ordered feedback.

\paragraph{Dataset and training.}
We align non-instruction-tuned base models from the Qwen, Llama, and
Gemma families, allowing differences after training to be attributed
more directly to the alignment objective.
Following the ranked-feedback setting of
\citet{tang2026beyond}, we construct five-way preference data from
\texttt{UltraFeedback} prompts
\citep{cui2023ultrafeedback,pmlr-v235-cui24f}.
For each prompt, five instruction-tuned generators of varying
capabilities produce candidate responses, which are scored by
\texttt{Skywork-Reward-V2-Llama-3.1-8B}
\citep{liu2024skywork} and sorted to obtain the target ranking.
We retain only high-confidence rankings with sufficiently separated
adjacent reward scores. Complete dataset construction, filtering, model
configurations, and training hyperparameters are provided in
Appendix~\sref{sec: ranked_data} and
Appendix~\sref{sec: training_details}. Refer to the code repository \href{https://github.com/rhaldarpurdue/f-RPO}{[link]} for implementation details.

\paragraph{Evaluation.}
We evaluate both \emph{ranking quality} and \emph{generation quality}.
For ranking evaluation, each fixed candidate response is assigned its
length-normalized log-likelihood under the aligned policy, and the
resulting ordering is compared with the target ranking.
Our primary metrics are Spearman's rank correlation $\rho$, which
measures agreement over the complete ranking, and Top-1 accuracy,
which measures recovery of the highest-ranked response.
Additional rank metrics like Kendall's $\tau$, pairwise accuracy, exact
ranking match, rank MAE, and rank RMSE are provided in the Appendix; these metrics show trends consistent with the primary measures and are therefore omitted from the main text. Full definitions are provided in
Appendix~\ref{app:ranking-methodology}.

For generation quality, we use AlpacaEval~2.0
\citep{alpaca}. Each aligned checkpoint is evaluated on the same
805 instructions and compared against generations from its
corresponding unaligned base model, with
\texttt{gpt-4.1-mini-2025-04-14} serving as the pairwise judge.
We report the resulting base-relative win rate.
Further details are provided in
Appendix~\ref{app:alpacaeval-methodology}.

\subsection{Main Results}
Figure~\ref{fig:cross-family} summarizes ranking performance across
model families and scales. All ranked-alignment methods improve
Spearman correlation relative to their corresponding unaligned
baselines, demonstrating that preference optimization improves the
model's ability to recover the target response ordering.
Across models, $f$-RPO consistently yields the largest or among the
largest gains in Spearman correlation, while remaining competitive in
Top-1 retrieval. The advantage is particularly pronounced for several
of the smaller base models, where there is greater room for alignment
to improve the initial ranking behavior.
The expanded evaluation in the appendix (Tables \ref{tab:scaling-k5-test-qwen2515b}, \ref{tab:scaling-k5-test-qwen257b}) shows the same overall trend
across Kendall correlation, pairwise accuracy, exact match, and rank
error metrics.
\begin{table}[!ht]
\centering
\caption{
AlpacaEval win rate (\%) against the corresponding base model.
Best and second-best results in each column are
\textbf{bold} and \underline{underlined}, respectively.
}
\label{tab:alpacaeval-k5}
\small
\setlength{\tabcolsep}{3.5pt}

\resizebox{\columnwidth}{!}{%
\begin{tabular}{lrrrr}
\toprule
Method
& Qwen 1.5B
& Llama 8B
& Gemma 2B
& Llama 3B \\
\midrule

\rowcolor{gray!15}
$f$-RPO-RKL
& \underline{70.10}
& \underline{51.18}
& \textbf{69.13}
& \underline{88.26} \\

\rowcolor{gray!15}
$f$-RPO-TV
& \textbf{81.24}
& \textbf{55.27}
& \underline{65.84}
& \textbf{94.84} \\

Mallows-RCPO
& 58.80
& 50.80
& 61.73
& 83.16 \\

PL-DPO
& 56.50
& 48.57
& 51.30
& 68.91 \\

\bottomrule
\end{tabular}%
}
\end{table}

Table~\ref{tab:alpacaeval-k5} evaluates whether these improvements in
ranking quality translate to generation quality.
The total-variation and reverse-KL variants of $f$-RPO obtain the
strongest win rates across the evaluated model families, with
$f$-RPO-TV achieving the best result on three of the four models.
Together with Figure~\ref{fig:cross-family}, these results indicate
that $f$-RPO improves both recovery of ranked preferences and
downstream response quality.

Representative generations illustrating qualitative differences
between the alignment objectives are provided in
Appendix~\ref{app:qualitative}.
\subsection{Ablations}

We next examine how ranking depth and the choice of $f$-divergence
affect $f$-RPO.

\paragraph{Effect of ranking depth.}
Figure~\ref{fig:ranking-depth} varies the number of ranked candidates
$k$ used during alignment. Increasing the amount of ranked feedback
improves performance on both evaluation metrics, with the largest
relative gains appearing in Spearman correlation. Top-1 accuracy also
increases as $k$ grows, indicating that richer ranked supervision
benefits both recovery of the complete ordering and identification of
the highest-ranked response.

\begin{figure}[!ht]
  \centering
  \includegraphics[
      width=0.8\columnwidth,
      trim=0.8cm 0 0 0.5cm,
      clip
  ]{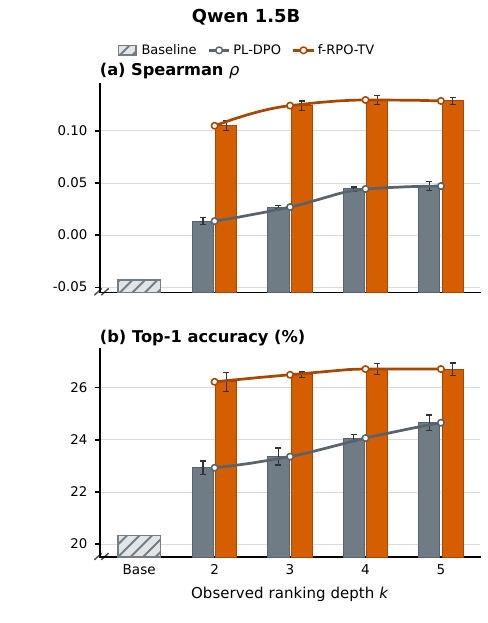}
  \caption{
  Effect of ranking depth $k$ on Qwen2.5-1.5B.
  Bars show ranking performance for PL-DPO and $f$-RPO-TV, while the
  overlaid curves highlight the trend across increasing $k$.
  }
  \label{fig:ranking-depth}
\end{figure}

\paragraph{Choice of $f$-divergence.}
\label{sec: fdiv ablation}
The theory in Section \sref{sec: theory} characterizes the
behavior of the $f$-RPO family at population level optimality, but finite-data optimization can depend substantially on the choice of divergence.
Table~\ref{tab:frpo-divergence-main} compares the divergences used in
our framework. All variants improve substantially over the unaligned
base model, indicating that the divergence-based ranked objective
learns meaningful preference structure. Reverse KL and total variation
provide the strongest overall ranking performance, motivating their
use as the principal $f$-RPO variants throughout the main experiments.
Results over the complete set of ranking metrics are reported in
Table~\ref{tab:frpo-divergence-k5-all-test}.

\begin{table}[!ht]
\centering
\small
\caption{
Ablation on the choice of $f$-divergence on ranking performance.
Mean (SE) reported over five seeds.
Best and second-best aligned results are \textbf{bold} and
\underline{underlined}, respectively. Model: Qwen2.5-0.5B.
}
\label{tab:frpo-divergence-main}

\resizebox{\columnwidth}{!}{%
\begin{tabular}{lccc}
\toprule
Divergence
& Spearman $\rho$
& Pairwise (\%)
& Top-1 (\%) \\
\midrule

Reverse KL
& \textbf{0.1648 (0.0039)}
& \textbf{56.70 (0.16)}
& \textbf{28.94 (0.10)} \\

Total variation
& \underline{0.1492 (0.0017)}
& \underline{56.16 (0.08)}
& \underline{28.08 (0.07)} \\

KL
& 0.0928 (0.0019)
& 53.98 (0.08)
& 26.94 (0.29) \\

Pearson $\chi^2$
& 0.1048 (0.0012)
& 54.40 (0.05)
& 27.70 (0.26) \\

Squared Hellinger
& 0.0937 (0.0005)
& 53.96 (0.03)
& 27.80 (0.18) \\

Jensen--Shannon
& 0.1044 (0.0010)
& 54.41 (0.05)
& 27.97 (0.16) \\

\midrule
Base
& 0.0314
& 51.60
& 23.85 \\

\bottomrule
\end{tabular}%
}
\end{table}

\subsection{Pairwise vs.\ Ranked Alignment}
\label{sec:pvsr}

Finally, we isolate the benefit of training directly from ranked
feedback. A ranking over $k$ responses can be decomposed into
$\binom{k}{2}$ binary comparisons and optimized using a pairwise
objective such as DPO; however, this discards the listwise structure
of the original observation and expands each ranked example into
multiple training comparisons.

Table~\ref{tab:pairwise-vs-ranked} compares standard DPO with its
ranked Plackett--Luce counterpart, PL-DPO, under the same
optimization-step budget. PL-DPO consistently improves Spearman
correlation over pairwise DPO across model scales, demonstrating
an empirical benefit from exploiting the ranked observation directly.
The $f$-RPO variants further increase this advantage, with the largest
gains observed under reverse KL and total variation.



\begin{table}[!ht]
\centering
\caption{
Pairwise versus ranked preference alignment under the same
optimization-step budget. PL-DPO is the ranked counterpart of DPO.
Entries report mean Spearman correlation $\rho$ (SE) over five seeds.
}
\label{tab:pairwise-vs-ranked}

\small
\setlength{\tabcolsep}{3.2pt}

\resizebox{\columnwidth}{!}{%
\begin{tabular}{lccc}
\toprule
Method
& Qwen2.5 0.5B
& Qwen2.5 1.5B
& Qwen2.5 7B \\
\midrule

Base
& 0.0314
& -0.0431
& 0.4390 \\

DPO (pairwise)
& 0.0996 (0.0014)
& 0.0420 (0.0041)
& 0.4480 (0.0005) \\

PL-DPO (ranked)
& 0.1131 (0.0017)
& 0.0471 (0.0045)
& 0.4530 (0.0006) \\

\rowcolor{gray!10}
$f$-RPO-TV (ranked)
& \underline{0.1492 (0.0017)}
& \underline{0.1288 (0.0035)}
& \textbf{0.4858 (0.0005)} \\

\rowcolor{gray!10}
$f$-RPO-RKL (ranked)
& \textbf{0.1648 (0.0039)}
& \textbf{0.1538 (0.0068)}
& \underline{0.4827 (0.0008)} \\

\midrule

$\Delta$ PL-DPO $-$ DPO
& +0.0135
& +0.0051
& +0.0050 \\

$\Delta$ best $f$-RPO $-$ DPO
& \textbf{+0.0652}
& \textbf{+0.1118}
& \textbf{+0.0378} \\

\bottomrule
\end{tabular}%
}
\end{table}
\section{Conclusion}

We introduced $f$-RPO, a divergence-based framework for ranked
preference alignment. By extending the notions of aligned and
unaligned response distributions to ranked feedback, the framework
yields a family of objectives indexed by the choice of
$f$-divergence. We showed that, at the population optimum, the learned
policy is reweighted according to the aligned-to-unaligned density
ratio and preserves the underlying
preference ordering. When the ranking contains only two responses,
$f$-RPO reduces to the binary divergence-based direct alignment
framework of \citet{haldar2025llm}.

Empirically, all evaluated $f$-RPO variants improve ranking quality
relative to the unaligned base models, with total variation and
reverse KL providing the strongest overall performance. Across model
families and scales, these variants outperform strong ranked-alignment
baselines in both ranking recovery and generation quality. Our
pairwise-versus-ranked comparison further shows that directly
optimizing from ranked feedback can provide stronger ranking alignment
than reducing the same supervision to binary comparisons. Overall,
our results establish ranked divergence optimization as a principled
and effective extension of preference alignment beyond pairwise
feedback.

\bibliographystyle{apalike}
\bibliography{references}

\clearpage
\appendix
\thispagestyle{empty}

\onecolumn
\appendix
\section{Additional Background and Definitions}
\subsection{Ranked Response Distribution}
\label{app:rank-conditional-density}

We expand on the ranked response density $\mathcal D_x^i$,
introduced in Eq.~\eqref{eq:rank-conditional-density}.

Fix a prompt $x$ and consider $k$ independent candidate responses
\[
    y_1,\ldots,y_k
    \overset{\mathrm{i.i.d.}}{\sim}
    \mathcal D_x,
\]
where $\mathcal D_x$ has density $p_x(y)$.
Let $u_x(y)$ denote the underlying utility with corresponding CDF
\[
    F_x(t)
    =
    \Pr_{y\sim\mathcal D_x}
    \bigl(u_x(y)\le t\bigr).
\]
Assume that the utility distribution is continuous, so that ties
occur with probability zero.

The ranked responses $y_{(1)},\ldots,y_{(k)}$ are obtained by sorting
the candidates in decreasing order of utility.
Let $\mathcal D_x^i$ denote the distribution of $y_{(i)}$.

\begin{lemma}[Rank-conditional density]
\label{lem:rank-conditional-density}
The density of the response occupying rank $i$ is
\begin{equation}
    p_x^i(y)
    =
    \frac{k!}{(i-1)!(k-i)!}\,
    p_x(y)\,
    F_x(u_x(y))^{k-i}
    \bigl[1-F_x(u_x(y))\bigr]^{i-1}.
\end{equation}
\end{lemma}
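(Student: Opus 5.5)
The plan is to derive the density of $y_{(i)}$ by computing the probability that $y_{(i)}$ lands in a small set $dy$ around $y$, using independence of the $k$ draws and continuity of the utility distribution. Fix $x$ and $y$, and write $t=u_x(y)$. For $y_{(i)}$ to lie in $dy$, exactly one of the $k$ candidates must fall in $dy$. Among the remaining $k-1$, exactly $i-1$ must have utility strictly greater than $t$ (so they occupy ranks $1,\ldots,i-1$), and $k-i$ must have utility at most $t$.

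First I count assignments. There are $k$ choices for which index lands in $dy$. There are then $\binom{k-1}{i-1}$ choices for which of the others are above it. The product is $k\binom{k-1}{i-1}=\frac{k!}{(i-1)!(k-i)!}$.

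Second, by independence, each such configuration has probability
\[
p_x(y)\,dy\cdot \bigl[1-F_x(t)\bigr]^{i-1} F_x(t)^{k-i}.
\]
Here $\Pr(u_x(y')>t)=1-F_x(t)$ and $\Pr(u_x(y')\le t)=F_x(t)$ for $y'\sim\mathcal D_x$.

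Third, the configurations are disjoint events. Summing them and dividing by $dy$ gives the stated formula for $p_x^i(y)$.

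The one step needing care is the rigor of the infinitesimal argument, which I would handle by a distribution-function computation. For any measurable set $S\subseteq\mathcal Y$, I would write
\[
\Pr(y_{(i)}\in S)=\sum_{m=1}^k \Pr\bigl(y_m\in S,\ \#\{l\ne m: u_x(y_l)>u_x(y_m)\}=i-1\bigr).
\]
This decomposition is valid because ties have probability zero under the continuity assumption, so the rank of each $y_m$ is almost surely well defined. I would then condition on $y_m=y$. Given $y_m=y$, the other $k-1$ draws are independent, so the count of those exceeding $u_x(y)$ is Binomial$(k-1,\,1-F_x(u_x(y)))$. The conditional probability that this count equals $i-1$ is therefore $\binom{k-1}{i-1}[1-F_x(u_x(y))]^{i-1}F_x(u_x(y))^{k-i}$. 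Integrating against $p_x(y)$ over $S$ and using exchangeability, which makes all $k$ terms equal, yields
\[
\Pr(y_{(i)}\in S)=\int_S p_x^i(y)\,dy,
\]
which identifies $p_x^i$ as the density.

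Finally, as a sanity check, summing the formula over $i=1,\ldots,k$ gives $p_x(y)$ by the binomial theorem, since $\sum_{i}\binom{k-1}{i-1}(1-F)^{i-1}F^{k-i}=1$.
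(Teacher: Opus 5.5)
Your argument is correct and follows the paper's own counting proof: $k$ choices for the candidate at $y$, $\binom{k-1}{i-1}$ choices for which others lie above it, and independence giving $[1-F_x(u_x(y))]^{i-1}F_x(u_x(y))^{k-i}$, with your Binomial conditioning over a measurable set $S$ adding the rigor that the paper's infinitesimal version leaves implicit. One small slip in your closing sanity check: since $\sum_{i}\binom{k-1}{i-1}(1-F)^{i-1}F^{k-i}=1$, summing the formula over $i$ gives $k\,p_x(y)$, not $p_x(y)$; each $p_x^i$ integrates to one, so it is the average $\frac1k\sum_i p_x^i$ that recovers $p_x$.
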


\begin{proof}
Fix a candidate response $y$ with utility $u_x(y)$.
For this response to occupy rank $i$, exactly $i-1$ of the
remaining $k-1$ candidates must have greater utility, while the
other $k-i$ candidates must have lower utility.

By independence, the probability of a particular assignment of
these higher- and lower-utility candidates is
\[
    \bigl[1-F_x(u_x(y))\bigr]^{i-1}
    F_x(u_x(y))^{k-i}.
\]
There are $\binom{k-1}{i-1}$ ways to choose which candidates
have greater utility. Moreover, any of the $k$ exchangeable
candidates may occupy rank $i$.

Multiplying by the base density $p_x(y)$, we obtain
\begin{align}
    p_x^i(y)
    &=
    k\binom{k-1}{i-1}
    p_x(y)
    F_x(u_x(y))^{k-i}
    \bigl[1-F_x(u_x(y))\bigr]^{i-1}
    \nonumber\\
    &=
    \frac{k!}{(i-1)!(k-i)!}\,
    p_x(y)\,
    F_x(u_x(y))^{k-i}
    \bigl[1-F_x(u_x(y))\bigr]^{i-1}.
\end{align}
\end{proof}

\subsection{$f$-Divergences and Variational Representation}
\label{app:f-divergence-background}

Let $P$ and $Q$ be probability measures on a measurable space
$\mathcal Z$, with $P\ll Q$. Let
\[
    R(z)
    :=
    \frac{dP}{dQ}(z)
\]
denote their density ratio.

\begin{definition}[$f$-divergence]
Let $f:\mathbb R_+\to\mathbb R$ be convex with $f(1)=0$.
The $f$-divergence between $P$ and $Q$ is
\begin{equation}
    D_f(P\|Q)
    :=
    \mathbb E_{z\sim Q}
    \left[
        f\!\left(R(z)\right)
    \right].
    \label{eq:app-f-divergence}
\end{equation}
\end{definition}

The convex conjugate of $f$ is defined as
\begin{equation}
    f^*(t)
    :=
    \sup_{u\ge 0}
    \left\{
        tu-f(u)
    \right\}.
    \label{eq:app-convex-conjugate}
\end{equation}

The following standard variational representation forms the basis of
our divergence-based alignment objective.

\begin{lemma}[Variational representation of $f$-divergence]
\label{lem:app-f-variational}
For probability measures $P\ll Q$,
\begin{equation}
    D_f(P\|Q)
    =
    \sup_T
    \left\{
        \mathbb E_{z\sim P}[T(z)]
        -
        \mathbb E_{z\sim Q}[f^*(T(z))]
    \right\},
    \label{eq:app-f-variational}
\end{equation}
where the supremum is over measurable functions
$T:\mathcal Z\to\operatorname{dom}(f^*)$.
If $f$ is differentiable, an optimal functional is
\begin{equation}
    T^*(z)
    =
    f'\!\left(
        \frac{dP}{dQ}(z)
    \right).
    \label{eq:app-optimal-critic}
\end{equation}
More generally,
$T^*(z)\in\partial f(R(z))$.
\end{lemma}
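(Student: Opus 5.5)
The plan is to prove the two inequalities separately, both resting on the pointwise Fenchel--Young inequality. Take the conjugate $f^*(t)=\sup_{u\ge0}\{tu-f(u)\}$ from Eq.~\eqref{eq:app-convex-conjugate}. For every $t\in\operatorname{dom}(f^*)$ and every $u\ge0$ it gives
\[
    tu - f^*(t) \le f(u),
\]
with equality if and only if $t\in\partial f(u)$. The subdifferential here is that of $f$ extended by $+\infty$ on $(-\infty,0)$, so $\partial f(0)$ may be a half-line or empty. The equality case is the usual characterization of the maximizer of a concave function, applied to $u\mapsto tu-f(u)$ on $[0,\infty)$.

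\textbf{Upper bound.} Fix a measurable $T:\mathcal Z\to\operatorname{dom}(f^*)$. Since $P\ll Q$, a change of measure gives $\mathbb E_P[T]=\mathbb E_Q[R\,T]$, where $R=dP/dQ\ge0$. Applying Fenchel--Young pointwise with $u=R(z)$ and $t=T(z)$ gives
\[
    R(z)T(z)-f^*(T(z))\le f(R(z)).
\]
Integrating against $Q$ then yields
\[
    \mathbb E_P[T]-\mathbb E_Q[f^*(T)]\le \mathbb E_Q[f(R)] = D_f(P\|Q).
\]
This requires the left-hand side to be well defined; I would restrict the supremum to $T$ for which the two expectations are not $\infty-\infty$, which is the implicit convention in Eq.~\eqref{eq:app-f-variational}. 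Taking the supremum over $T$ gives ``$\le$''.

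\textbf{Attainment.} If $f$ is differentiable, set $T^*(z)=f'(R(z))$. More generally, take a measurable selection $T^*(z)\in\partial f(R(z))$, for instance the right derivative $f'_+(R(z))$, which is a monotone hence Borel function of $R$. Fenchel--Young then holds with equality pointwise, so integrating gives
\[
    \mathbb E_P[T^*]-\mathbb E_Q[f^*(T^*)]=D_f(P\|Q),
\]
which establishes both the reverse inequality and Eq.~\eqref{eq:app-optimal-critic}. Measurability is immediate, since $T^*$ is a composition of $R$ with a monotone function.

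\textbf{Main obstacle: boundary and integrability.} On the set $\{R=0\}$ the subdifferential can be empty. For example, $f(u)=u\log u$ has $f'(0^+)=-\infty$, so $T^*$ would take the value $-\infty\notin\operatorname{dom}(f^*)$. Relatedly, $\mathbb E_Q[f^*(T^*)]$ might fail to be integrable even when $D_f<\infty$.

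To handle this, I would use the truncations
\[
    T_n=\max\{-n,\min\{n,f'_+(R)\}\}, \qquad n\in\mathbb N.
\]
Since $f^*$ is convex and nondecreasing, with $f^*(t)\ge -f(0)$ because $u=0$ is admissible in the supremum, each $T_n$ gives a well-defined objective value. The integrand $R\,T_n-f^*(T_n)$ converges pointwise to $f(R)$, using equality on $\{R>0\}$ and the value $-f^*(-n)\to f(0)$ on $\{R=0\}$. Monotone or dominated convergence, split according to the sign of the integrand, then shows the objective values converge to $D_f(P\|Q)$. The convention $D_f=+\infty$ is handled the same way, with the objective diverging. Hence the supremum equals $D_f$, and it is attained by $T^*=f'(R)$ whenever this critic is finite and integrable, as in the differentiable case stated.
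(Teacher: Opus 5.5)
Your proposal is correct and follows the same route as the paper: apply Fenchel--Young pointwise with $u=R(z)$, $t=T(z)$, integrate against $Q$ using $dP=R\,dQ$, and get equality pointwise for a selection $T^*(z)\in\partial f(R(z))$. The truncation $T_n$ is extra rigor the paper omits, since it ignores $\{R=0\}$ and integrability entirely; your ``monotone or dominated convergence'' step goes through most cleanly if you note two facts: $R\,T_n-f^*(T_n)$ is nondecreasing in $n$, because $T_n$ moves monotonically toward a maximizer of the concave map $t\mapsto Rt-f^*(t)$; and the integrand for a fixed large $n_0$ already has $Q$-integrable negative part, so monotone convergence applies directly.
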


\begin{proof}
By the Fenchel--Young inequality,
\[
    ut
    \le
    f(u)+f^*(t),
\]
with equality if and only if
$t\in\partial f(u)$.
Taking
$u=R(z)=dP/dQ(z)$ and $t=T(z)$ gives
\[
    R(z)T(z)-f^*(T(z))
    \le
    f(R(z)).
\]
Taking expectations under $Q$ and using
$dP=R\,dQ$ yields
\begin{align}
    \mathbb E_{P}[T]
    -
    \mathbb E_{Q}[f^*(T)]
    &=
    \mathbb E_{Q}
    \left[
        R\,T-f^*(T)
    \right]
    \\
    &\le
    \mathbb E_{Q}[f(R)]
    =
    D_f(P\|Q).
\end{align}
Equality is attained pointwise whenever
\[
    T^*(z)\in\partial f(R(z)).
\]
If $f$ is differentiable, this reduces to
$T^*(z)=f'(R(z))$, proving the result.
\end{proof}
\subsection{Canonical Link Functions}
\label{app:canonical-links}

The variational critic $T(z)$ in
Eq.~\eqref{eq:app-f-variational} takes values in the effective domain
$\operatorname{dom}(f^*)$ of the convex conjugate. In contrast, the
implicit reward $r_\theta(x,y)$ is real-valued. To parameterize the
critic using the implicit reward while respecting the domain of
$f^*$, we use a monotone link
\[
    g_f:\mathbb R\rightarrow \operatorname{dom}(f^*),
    \qquad
    T_\theta(x,y)=g_f(r_\theta(x,y)).
\]
The monotonicity of $g_f$ is also important for alignment: ordering
the critic values is equivalent to ordering the corresponding
implicit rewards. Following \citet{haldar2026f}, we use the canonical
links summarized in Table~\ref{tab:canonical-links}.

\begin{table}[t]
\centering
\caption{Common $f$-divergences, convex conjugates, effective domains,
and canonical link functions used in our objectives.}
\label{tab:canonical-links}
\resizebox{0.8\linewidth}{!}{
\begin{tabular}{lcccc}
\toprule
Divergence
& $f(t)$
& $f^*(u)$
& $\operatorname{dom}(f^*)$
& $g_f(r)$ \\
\midrule

KL
& $t\log t$
& $e^{u-1}$
& $\mathbb R$
& $r$
\\[1mm]

Reverse KL
& $-\log t$
& $-1-\log(-u)$
& $u<0$
& $-e^{-r}$
\\[1mm]

Pearson $\chi^2$
& $(t-1)^2$
& $\frac14u^2+u$
& $\mathbb R$
& $r$
\\[1mm]

Hellinger
& $(\sqrt{t}-1)^2$
& $\dfrac{u}{1-u}$
& $u<1$
& $1-e^{-r}$
\\[2mm]

Jensen--Shannon
& $t\log t-(t+1)\log\frac{t+1}{2}$
& $-\log(2-e^u)$
& $u<\log 2$
& $\log 2-\log(1+e^{-r})$
\\[2mm]

Total variation
& $\frac12|t-1|$
& $u$
& $|u|\le \frac12$
& $\frac12(1+e^{-r})^{-1}$
\\

\bottomrule
\end{tabular}
}
\end{table}

Substituting the linked critic into the variational objective gives
\begin{equation}
    \mathcal J_f(\theta)
    =
    \mathbb E_{P^+}
    \!\left[g_f(r_\theta)\right]
    -
    \mathbb E_{P^-}
    \!\left[
        f^*\!\left(g_f(r_\theta)\right)
    \right],
    \label{eq:app-linked-f-objective}
\end{equation}
which provides the basis for divergence-based alignment objectives.
\subsection{Divergence-Based Alignment Objectives}
\label{app:divergence-alignment}

The variational objective in
Eq.~\eqref{eq:app-linked-f-objective} provides a general template for
alignment once an \emph{aligned} distribution $\mathcal D^+$ and an
\emph{unaligned} distribution $\mathcal D^-$ are specified:
\begin{equation}
    \mathcal L_f(\theta)
    =
    -\mathbb E_{(x,y)\sim\mathcal D^+}
    \!\left[g_f(r_\theta(x,y))\right]
    +
    \mathbb E_{(x,y)\sim\mathcal D^-}
    \!\left[
        f^*\!\left(g_f(r_\theta(x,y))\right)
    \right].
    \label{eq:generic-divergence-alignment}
\end{equation}
Different choices of $\mathcal D^+$ and $\mathcal D^-$ recover
different alignment settings. We briefly review existing examples
below and summarize them in Table~\ref{tab:divergence-alignment-summary}.

\paragraph{Binary preference alignment.}
For binary preference data $(x,y^+,y^-)$, the aligned and unaligned
distributions are induced by the preferred and rejected responses,
respectively. The resulting $f$-divergence direct optimization (FDO)
objective is
\begin{equation}
\begin{aligned}
    \mathcal L_{\mathrm{FDO}}^{(f,g)}(\theta)
    ={}&
    -\mathbb E_{(x,y)\sim\mathcal D^+}
    \!\left[g_f(r_\theta(x,y))\right]
    \\
    &+
    \mathbb E_{(x,y)\sim\mathcal D^-}
    \!\left[
        f^*\!\left(g_f(r_\theta(x,y))\right)
    \right],
\end{aligned}
\label{eq:app-fdo}
\end{equation}
where $\mathcal D^+$ and $\mathcal D^-$ correspond to the chosen and
rejected response distributions. Thus, binary preference optimization
can be viewed directly as variational divergence estimation between
preferred and less-preferred responses. \citep{haldar2025llm}

\paragraph{Reward-based alignment.}
The same view extends to on-policy reward optimization through
$f$-GRPO. Given a prompt $x$, responses
$y_1,\ldots,y_G\sim\pi_{\mathrm{old}}(\cdot\mid x)$ are sampled and
assigned normalized advantages $a_1,\ldots,a_G$, where $a_i
    =
    \frac{
        r(x,y_i)-\frac{1}{G}\sum_{j=1}^G r(x,y_j)
    }{
        \operatorname{Std}\{r(x,y_j)\}_{j=1}^G+\epsilon
    }$. Responses with
positive advantage contribute to the aligned term, while responses
with non-positive advantage contribute to the unaligned term.
Using
\[
    r_{\theta,i}
    =
    \beta
    \log
    \frac{\pi_\theta(y_i\mid x)}
         {\pi_{\mathrm{ref}}(y_i\mid x)},
\]
the corresponding functional is
\begin{equation}
\psi_f(r_{\theta,i},a_i)
=
\begin{cases}
\vec{\sigma}_i
\!\left(
\{r_j-\log\pi_{\mathrm{old}}(y_j\mid x)\}_{j=1}^G
\right)
g_f(r_{\theta,i}),
& a_i>0,
\\[4pt]
\vec{\sigma}_i
\!\left(
\{-r_j-\log\pi_{\mathrm{old}}(y_j\mid x)\}_{j=1}^G
\right)
f^*\!\left(g_f(r_{\theta,i})\right),
& a_i\le 0,
\end{cases}
\label{eq:app-fgrpo-functional}
\end{equation}
where $\vec{\sigma}_i(\cdot)$ denotes the $i$-th softmax weight over
the sampled responses. The $f$-GRPO loss is
\begin{equation}
    \mathcal L_{f\text{-GRPO}}^{(f,g)}(\theta)
    =
    \mathbb E_x
    \left[
        -\frac{1+\beta^{-1}}{G}
        \sum_{i=1}^G
        a_i\,
        \psi_f(r_{\theta,i},a_i)
    \right].
    \label{eq:app-fgrpo}
\end{equation}
At the population level, this induces an aligned distribution over
above-average-reward responses and an unaligned distribution over
below-average-reward responses, with the softmax weights providing their normalized importance within each rollout group. \citep{haldar2026f}

\paragraph{Hybrid alignment.}
The $f$-HAL objective combines off-policy binary preference alignment
with on-policy reward-based alignment:
\begin{equation}
    \mathcal L_{f\text{-HAL}}^{(f,g)}(\theta)
    =
    \lambda
    \mathcal L_{\mathrm{FDO}}^{(f,g)}(\theta)
    +
    (1-\lambda)
    \mathcal L_{f\text{-GRPO}}^{(f,g)}(\theta),
    \label{eq:app-fhal}
\end{equation}
where $\lambda\in[0,1]$ controls the relative contribution of
preference supervision and reward-based optimization. \citep{haldar2026f}

These objectives illustrate that the variational $f$-divergence
framework is determined primarily by how the aligned and unaligned
response distributions are constructed. In this work, we extend the
same principle to ranked preference feedback.
\begin{table}[t]
\centering
\caption{
Existing divergence-based alignment objectives differ primarily in
how the aligned and unaligned response distributions are defined.
}
\label{tab:divergence-alignment-summary}
\resizebox{\columnwidth}{!}{
\begin{tabular}{lccc}
\toprule
Method
& Feedback
& Aligned $\mathcal D^+$
& Unaligned $\mathcal D^-$
\\
\midrule

FDO
& Pairwise preference 
& Chosen responses
& Rejected responses
\\

$f$-GRPO
& Scalar reward
& Positive-advantage responses
& Non-positive-advantage responses
\\

$f$-HAL
& Pairwise Preference $+$ reward
& Chosen $+$ positive-advantage
& Rejected $+$ non-positive-advantage
\\

$f$-RPO (ours)
& Ranked preference
& Higher-ranked responses
& Lower-ranked responses
\\

\bottomrule
\end{tabular}
}
\end{table}
\subsection{Derivation of the Ranked $f$-Divergence Objective}
\label{app:frpo-derivation}

Recall the ranked aligned and unaligned distributions defined in
Eqs.~\eqref{eq:rank-aligned-dist} and
\eqref{eq:rank-unaligned-dist}:
\begin{align}
    d\mathcal D^+(x,y)
    &=
    p_{\mathcal X}(x)
    \sum_{i=1}^{k-1}
    \rho_x(i)p_x^i(y),
    \label{eq:app-ranked-positive}
    \\
    d\mathcal D^-(x,y)
    &=
    p_{\mathcal X}(x)
    \sum_{i=1}^{k-1}
    \rho_x(i)
    \sum_{j=i+1}^{k}
    q_x(j\mid i)p_x^j(y).
    \label{eq:app-ranked-negative}
\end{align}

Applying the variational representation of the $f$-divergence gives
\begin{equation}
\begin{aligned}
D_f(\mathcal D^+\|\mathcal D^-)
=
\sup_T
\Big\{
    \mathbb E_{\mathcal D^+}[T(x,y)]
    -
    \mathbb E_{\mathcal D^-}[f^*(T(x,y))]
\Big\}.
\end{aligned}
\label{eq:frpo-variational-appendix}
\end{equation}

Using the definitions above, the aligned expectation can be written as
\begin{align}
    \mathbb E_{\mathcal D^+}[T]
    =
    \mathbb E_{x,\mathbf y_{(1:k)}}
    \sum_{i=1}^{k-1}
    \rho_x(i)\,
    T(x,y_{(i)}),
    \label{eq:frpo-positive-expectation}
\end{align}
while the unaligned expectation becomes
\begin{align}
    \mathbb E_{\mathcal D^-}[f^*(T)]
    =
    \mathbb E_{x,\mathbf y_{(1:k)}}
    \sum_{i=1}^{k-1}
    \rho_x(i)
    \sum_{j=i+1}^{k}
    q_x(j\mid i)\,
    f^*(T(x,y_{(j)})).
    \label{eq:frpo-negative-expectation}
\end{align}

Therefore,
\begin{align}
D_f(\mathcal D^+\|\mathcal D^-)
=
\sup_T\;
\mathbb E_{x,\mathbf y_{(1:k)}}
\sum_{i=1}^{k-1}
\rho_x(i)
\Bigg[
    T(x,y_{(i)})
    -
    \sum_{j=i+1}^{k}
    q_x(j\mid i)
    f^*(T(x,y_{(j)}))
\Bigg].
\label{eq:frpo-ranked-variational}
\end{align}

We parameterize the variational critic using the implicit policy reward,
\begin{equation}
    T_\theta(x,y)
    =
    g_f(r_\theta(x,y)),
    \qquad
    r_\theta(x,y)
    =
    \beta
    \log
    \frac{\pi_\theta(y\mid x)}
         {\pi_{\mathrm{ref}}(y\mid x)}.
\end{equation}
Substitution into Eq.~\eqref{eq:frpo-ranked-variational} yields
\begin{align}
    \mathcal J_{f\text{-RPO}}(\theta)
    =
    \mathbb E_{x,\mathbf y_{(1:k)}}
    \sum_{i=1}^{k-1}\rho_x(i)
    \Bigg[
        g_f(r_{\theta,i})
        -
        \sum_{j=i+1}^{k}
        q_x(j\mid i)
        f^*\!\left(g_f(r_{\theta,j})\right)
    \Bigg],
    \label{eq:frpo-objective-appendix}
\end{align}
where $r_{\theta,i}=r_\theta(x,y_{(i)})$.
The training objective is
\begin{equation}
    \mathcal L_{f\text{-RPO}}(\theta)
    =
    -\mathcal J_{f\text{-RPO}}(\theta),
\end{equation}
which recovers Eq.~\eqref{eq:frpo-loss}.
\begin{table*}[t]
\centering
\small
\setlength{\tabcolsep}{4pt}
\caption{
$f$-divergence instantiations considered in this work.
The final column gives the population scaling
$g_f^{-1}(f'(t))$, where
$t=d\mathcal D^+/d\mathcal D^-$.
Terms independent of $\theta$ are omitted from the displayed objectives.
}
\label{tab:frpo-all-objectives}

\resizebox{\textwidth}{!}{%
\begin{tabular}{lcccc}
\toprule
Divergence
& $f(t)$
& $g_f(r)$
& $\mathcal L_{f\text{-RPO}}(\theta)$
& $g_f^{-1}(f'(t))$
\\
\midrule

Reverse KL
&
$-\log t$
&
$-e^{-r}$
&
$\displaystyle
\mathbb E\sum_i\rho_x(i)
\left[
e^{-r_{\theta,i}}
+
\sum_{j>i}q_x(j\mid i)r_{\theta,j}
\right]
$
&
$\log t$
\\[3mm]

Total variation
&
$\frac12|t-1|$
&
$\frac12\sigma(r)$
&
$\displaystyle
\frac12\mathbb E\sum_i\rho_x(i)
\left[
-\sigma(r_{\theta,i})
+
\sum_{j>i}q_x(j\mid i)\sigma(r_{\theta,j})
\right]
$
&
$\displaystyle
-\log\!\left(
(\operatorname{sign}(t-1))^{-1}-1
\right)
$
\\[3mm]

KL
&
$t\log t$
&
$r$
&
$\displaystyle
\mathbb E\sum_i\rho_x(i)
\left[
-r_{\theta,i}
+
\sum_{j>i}q_x(j\mid i)e^{r_{\theta,j}-1}
\right]
$
&
$\log t+1$
\\[3mm]

Pearson $\chi^2$
&
$(t-1)^2$
&
$r$
&
$\displaystyle
\mathbb E\sum_i\rho_x(i)
\left[
-r_{\theta,i}
+
\sum_{j>i}q_x(j\mid i)
\left(
\frac14 r_{\theta,j}^2+r_{\theta,j}
\right)
\right]
$
&
$2(t-1)$
\\[3mm]

Squared Hellinger
&
$(\sqrt t-1)^2$
&
$1-e^{-r}$
&
$\displaystyle
\mathbb E\sum_i\rho_x(i)
\left[
e^{-r_{\theta,i}}
+
\sum_{j>i}q_x(j\mid i)e^{r_{\theta,j}}
\right]
$
&
$\frac12\log t$
\\[3mm]

Jensen--Shannon
&
$\displaystyle
t\log t-(t+1)\log\frac{t+1}{2}
$
&
$\displaystyle
\log 2-\log(1+e^{-r})
$
&
$\displaystyle
\mathbb E\sum_i\rho_x(i)
\left[
-\log\!\bigl(2\sigma(r_{\theta,i})\bigr)
-
\sum_{j>i}q_x(j\mid i)
\log\!\bigl(2\sigma(-r_{\theta,j})\bigr)
\right]
$
&
$\log t$
\\

\bottomrule
\end{tabular}%
}
\end{table*}
\section{Proofs}
\label{sec: proofs}
\subsection{Proof of Lemma~\ref{lem:density_ratio_util}}
\label{app:proof-density-ratio-utility}

\begin{proof}
Fix a prompt $x$ and suppress its dependence in the notation.
Using the rank masses from Eq.~\eqref{eq:rank-masses}, the conditional
aligned and unaligned densities can be written as
\begin{equation}
\begin{aligned}
    d\mathcal D^+(y\mid x)
    &=
    \sum_{i=1}^{k}
    B_i\,p_x^i(y),
    \\
    d\mathcal D^-(y\mid x)
    &=
    \sum_{i=1}^{k}
    A_i\,p_x^i(y).
\end{aligned}
\label{eq:app-rank-mass-mixtures}
\end{equation}

Let $y,y'$ satisfy $u_x(y)>u_x(y')$. From
Eq.~\eqref{eq:rank-conditional-density}, for $i<j$,
\begin{align}
\frac{p_x^i(y)}{p_x^j(y)}
&\propto
\left(
    \frac{F_x(u_x(y))}
         {1-F_x(u_x(y))}
\right)^{j-i}.
\end{align}
Since $F_x$ is nondecreasing,
\begin{equation}
    p_x^i(y)p_x^j(y')
    \ge
    p_x^i(y')p_x^j(y),
    \qquad i<j.
    \label{eq:rank-density-order}
\end{equation}

Now consider the cross-product difference
\begin{align}
&
d\mathcal D^+(y\mid x)
d\mathcal D^-(y'\mid x)
-
d\mathcal D^+(y'\mid x)
d\mathcal D^-(y\mid x)
\nonumber\\
&=
\sum_{i<j}
(B_iA_j-B_jA_i)
\Big[
    p_x^i(y)p_x^j(y')
    -
    p_x^i(y')p_x^j(y)
\Big].
\label{eq:app-density-ratio-cross-product}
\end{align}
By the rank-favored condition
$B_iA_j\ge B_jA_i$ for $i<j$, and by
Eq.~\eqref{eq:rank-density-order}, every summand is nonnegative.
Therefore,
\[
d\mathcal D^+(y\mid x)
d\mathcal D^-(y'\mid x)
\ge
d\mathcal D^+(y'\mid x)
d\mathcal D^-(y\mid x).
\]
Whenever the unaligned densities are positive, dividing by
$d\mathcal D^-(y\mid x)d\mathcal D^-(y'\mid x)$ yields
\[
    R_x(y)\ge R_x(y').
\]
Hence the aligned-to-unaligned density ratio is nondecreasing in the
underlying utility.
\end{proof}

\subsection{Proof of Theorem~\ref{thm:divergence-alignment}}
\label{app:proof-divergence-alignment}

\begin{proof}
By Eq.~\eqref{eq:frpo-variational-appendix},
\begin{align}
D_f(\mathcal D^+\|\mathcal D^-)
=
\sup_T
\left\{
    \mathbb E_{\mathcal D^+}[T(x,y)]
    -
    \mathbb E_{\mathcal D^-}[f^*(T(x,y))]
\right\}.
\label{eq:app-thm1-variational}
\end{align}
The $f$-RPO objective parameterizes the critic as
\[
    T_\theta(x,y)
    =
    g_f(r_\theta(x,y)).
\]
Therefore,
\[
    -\mathcal L_{f\text{-RPO}}(\theta)
    =
    \mathbb E_{\mathcal D^+}[T_\theta]
    -
    \mathbb E_{\mathcal D^-}[f^*(T_\theta)].
\]

Under exact realizability of the unrestricted variational optimum by
the policy-parameterized critic, there exists $\theta^*$ such that
$T_{\theta^*}=T^*$. Hence,
\begin{equation}
    -\mathcal L_{f\text{-RPO}}(\theta^*)
    =
    D_f(\mathcal D^+\|\mathcal D^-).
\end{equation}

From Lemma~\ref{lem:app-f-variational}, an optimal variational critic
satisfies
\begin{equation}
    T^*(x,y)
    \in
    \partial f(R_x(y)),
    \qquad
    R_x(y)
    =
    \frac{d\mathcal D^+}{d\mathcal D^-}(x,y).
    \label{eq:app-optimal-ranked-critic}
\end{equation}
For differentiable $f$, this becomes
\begin{equation}
    T^*(x,y)
    =
    f'(R_x(y)).
\end{equation}

Under critic realizability,
\begin{equation}
    g_f(r_{\theta^*}(x,y))
    =
    f'(R_x(y)).
\end{equation}
Applying $g_f^{-1}$ gives
\begin{equation}
    r_{\theta^*}(x,y)
    =
    g_f^{-1}\!\left(f'(R_x(y))\right).
    \label{eq:app-optimal-implicit-reward}
\end{equation}

Finally, using the implicit-reward parameterization
\[
    r_{\theta^*}(x,y)
    =
    \beta
    \log
    \frac{\pi_{\theta^*}(y\mid x)}
         {\pi_{\mathrm{ref}}(y\mid x)},
\]
we obtain
\[
    \frac{\pi_{\theta^*}(y\mid x)}
         {\pi_{\mathrm{ref}}(y\mid x)}
    =
    \exp\!\left(
        \frac{
            g_f^{-1}(f'(R_x(y)))
        }{\beta}
    \right).
\]
Equivalently,
\begin{equation}
    \pi_{\theta^*}(y\mid x)
    \propto
    \pi_{\mathrm{ref}}(y\mid x)
    \exp\!\left(
        \frac{
            g_f^{-1}(f'(R_x(y)))
        }{\beta}
    \right),
\end{equation}
which proves the result.
\end{proof}
\subsection{Proof of Theorem~\ref{thm:rank-consistency}}
\label{app:proof-rank-consistency}

\begin{proof}
Fix a prompt $x$.

\paragraph{Implicit-reward ordering.}
By Lemma~\ref{lem:density_ratio_util}, rank-favored weights imply that
\begin{equation}
    u_x(y)>u_x(y')
    \quad\Longrightarrow\quad
    R_x(y)\ge R_x(y').
    \label{eq:app-density-ratio-order}
\end{equation}
From Theorem~\ref{thm:divergence-alignment},
\begin{equation}
    r_{\theta^*}(x,y)
    =
    g_f^{-1}\!\left(f'(R_x(y))\right).
\end{equation}
Since $f'$ is nondecreasing for convex $f$ and $g_f^{-1}$ is
monotone,
Eq.~\eqref{eq:app-density-ratio-order} implies
\[
    u_x(y)>u_x(y')
    \quad\Longrightarrow\quad
    r_{\theta^*}(x,y)
    \ge
    r_{\theta^*}(x,y').
\]
Applying this relation to
$y_{(1)}\succ_x\cdots\succ_x y_{(k)}$ gives
\[
    r_{\theta^*}(x,y_{(1)})
    \ge
    \cdots
    \ge
    r_{\theta^*}(x,y_{(k)}).
\]

\paragraph{Policy ordering under reference bias.}
For adjacent ranks $i$ and $i+1$, Theorem~
\ref{thm:divergence-alignment} gives
\begin{align}
\log
\frac{
    \pi_{\theta^*}(y_{(i)}\mid x)
}{
    \pi_{\theta^*}(y_{(i+1)}\mid x)
}
&=
\log
\frac{
    \pi_{\mathrm{ref}}(y_{(i)}\mid x)
}{
    \pi_{\mathrm{ref}}(y_{(i+1)}\mid x)
}
\nonumber\\
&\quad+
\frac{1}{\beta}
\Big[
    g_f^{-1}(f'(R_x(y_{(i)})))
    -
    g_f^{-1}(f'(R_x(y_{(i+1)})))
\Big].
\end{align}
Using the definitions of $\Delta_{x,i}$ and $b_{x,i}$ from
Theorem~\ref{thm:rank-consistency}, this becomes
\begin{equation}
\log
\frac{
    \pi_{\theta^*}(y_{(i)}\mid x)
}{
    \pi_{\theta^*}(y_{(i+1)}\mid x)
}
=
-b_{x,i}
+
\frac{\Delta_{x,i}}{\beta}.
\label{eq:app-policy-rank-log-odds}
\end{equation}

Therefore, the desired adjacent ordering
\[
    \pi_{\theta^*}(y_{(i)}\mid x)
    \ge
    \pi_{\theta^*}(y_{(i+1)}\mid x)
\]
holds whenever
\[
    \frac{\Delta_{x,i}}{\beta}
    \ge
    b_{x,i}.
\]

If $b_{x,i}\le0$, this condition is automatically satisfied because
$\Delta_{x,i}\ge0$. If $b_{x,i}>0$, it is sufficient that
\[
    \beta
    \le
    \frac{\Delta_{x,i}}{b_{x,i}}.
\]
Hence the ordering holds simultaneously for all prompts and adjacent
ranks whenever
\begin{equation}
    \beta
    \le
    \inf_{x,\;i:\,b_{x,i}>0}
    \frac{\Delta_{x,i}}{b_{x,i}},
\end{equation}
provided the right-hand side is positive.

Since ordering every adjacent pair implies the complete ordering,
\[
    \pi_{\theta^*}(y_{(1)}\mid x)
    \ge
    \cdots
    \ge
    \pi_{\theta^*}(y_{(k)}\mid x),
\]
which completes the proof.
\end{proof}
\subsection{Rank-Favored Property of the Experimental Weights}
\label{app:rank-weight-condition}

In our experiments, we use the normalized weights
\begin{equation}
    \rho_x(i)
    =
    \frac{2(k-i)}{k(k-1)},
    \qquad
    i=1,\ldots,k-1,
\end{equation}
and
\begin{equation}
    q_x(j\mid i)
    =
    \frac{2(j-i)}
    {(k-i)(k-i+1)},
    \qquad
    j=i+1,\ldots,k.
\end{equation}

The corresponding aligned rank mass is
\begin{equation}
    B_i
    =
    \frac{2(k-i)}{k(k-1)},
    \qquad i<k,
\end{equation}
which is nonincreasing in $i$.

For $j>1$, the unaligned rank mass is
\begin{align}
    A_j
    &=
    \sum_{i<j}
    \rho_x(i)q_x(j\mid i)
    \\
    &=
    \frac{4}{k(k-1)}
    \sum_{i=1}^{j-1}
    \frac{j-i}{k-i+1}.
    \label{eq:app-experimental-Aj}
\end{align}
Moreover,
\begin{align}
    A_{j+1}-A_j
    &=
    \frac{4}{k(k-1)}
    \sum_{i=1}^{j}
    \frac{1}{k-i+1}
    >0,
\end{align}
so $A_j$ is increasing in rank index.

Hence, for $i<j$,
\[
    B_i\ge B_j,
    \qquad
    A_i\le A_j,
\]
and therefore
\begin{equation}
    B_iA_j
    \ge
    B_jA_i.
\end{equation}
Thus the weighting distributions used in our experiments satisfy the
rank-favored condition in
Eq.~\eqref{eq:rank-weight-condition}.
\section{Ranked Data Generation}
\label{sec: ranked_data}
\paragraph{Data design.}
We construct a five-way preference dataset in which the contrast
between ranks is conspicuous.  Unlike a binary preference corpus,
each example contains a prompt $x$ and five responses
$y_1 \succ y_2 \succ \cdots \succ y_5$.  The candidates are generated by 5 different models/configs of varying capabilities without
an assigned rank; their ordering is induced only after generation by an
external reward model. We make sure these generations are diverse by making sure the reward gaps between adjacent ranks are wide enough.

\subsection{Prompt sampling and splits}
We sample 4,000 diverse prompts from the prompt pool of
\texttt{openbmb/UltraFeedback}~\citep{cui2024ultra}.  Sampling involves 
 1,200 FLAN, 900 Evol-Instruct, 700 UltraChat, 700 ShareGPT, and 500 FalseQA prompts.  TruthfulQA prompts are excluded to reduce
direct overlap with a frequently used factuality benchmark. Duplicates are removed, and the prompts are divided into 3,000 training, 400
validation, and 600 test prompts \emph{before} any response is generated.

\subsection{Diverse Response Generation}

For every prompt, we generate one response from each model variant in
Table~\ref{tab:appendix-generation-mixture}, reflecting diverse set of responses of varying utility.  The mixture is deliberately
heterogeneous,  Qwen2.5-14B-Instruct supplies a relatively strong and stable
candidate; the two Qwen2.5-7B-Instruct slots separate ordinary and more diverse
sampling; Llama-3.1-8B-Instruct introduces cross-family variation; and
Qwen2.5-1.5B-Instruct supplies a naturally weaker candidate.

\begin{table}[!ht]
  \centering
  \small
  \caption{Models and decoding parameters used to generate the five response
  candidates. }
  \label{tab:appendix-generation-mixture}
  \begin{tabular}{lccc}
    \toprule
    Generator & Temperature & Top-$p$ & Max. new tokens \\
    \midrule
    Qwen2.5-14B-Instruct     & 0.25 & 0.90 & 1,536 \\
    Qwen2.5-7B-Instruct      & 0.70 & 0.90 & 1,536 \\
    Qwen2.5-7B-Instruct      & 1.00 & 0.95 & 1,536 \\
    Llama-3.1-8B-Instruct    & 0.70 & 0.90 & 1,536 \\
    Qwen2.5-1.5B-Instruct    & 0.80 & 0.95 & 1,536 \\
    \bottomrule
  \end{tabular}
\end{table}

\subsection{Reward scoring and rank construction}

We score each prompt--response conversation independently with \texttt{Skywork Reward
V2 (Llama-3.1-8B)}~\citep{liu2024skywork}. For a prompt $x$ and response $y_i$, let
$a_i=R_{\mathrm{Skywork}}(x,y_i)$ be the scalar reward.  The ground-truth
ranking is formed by descending reward,
\begin{equation}
  \operatorname{argsort}_{i\in\{1,\ldots,5\}}(-a_i),
  \qquad
  a_{\pi_{x,1}} \geq \cdots \geq a_{\pi_{x,5}}.
\end{equation}
Our Alignment methods use these ordinal ranks as ground truth.

Table~\ref{tab:appendix-generator-statistics} shows that the varying response generators from Table \ref{tab:appendix-generation-mixture} create the intended heirarchical utility of responses.  In particular, the weak 1.5B candidate is the worst; the 14B model with conservative sampling is the best in terms of mean reward/rank, while the other generator configs lie in order as well.

\begin{table}[!ht]
  \centering
  \small
  \caption{Reward and rank statistics before high-confidence filtering,
  computed over the 3,854 complete five-way rows.  Lower mean rank is better.}
  \label{tab:appendix-generator-statistics}
  \begin{tabular}{lrrrr}
    \toprule
    Candidate & Mean reward & Mean rank & Top-1 (\%) & Bottom (\%) \\
    \midrule
    Qwen 14B, conservative & 18.67 & 2.31 & 31.76 & 2.83 \\
    Qwen 7B, balanced      & 17.58 & 2.55 & 22.08 & 3.30 \\
    Llama 8B, balanced     & 17.32 & 2.68 & 28.23 & 8.85 \\
    Qwen 7B, diverse       & 16.74 & 2.82 & 16.19 & 5.27 \\
    Qwen 1.5B, weaker      &  4.02 & 4.64 &  1.74 & 79.76 \\
    \bottomrule
  \end{tabular}
\end{table}

Across all 19,270 responses, the reward median is 14.56, with 5th and 95th
percentiles $-4.61$ and $36.00$.  The median within-prompt top-to-bottom gap is
17.93 (5th percentile 4.88; 95th percentile 38.25), so the best--worst
contrast is typically strong.  Adjacent ranks are less certain: 444 complete
rows (11.52\%) contain an exact reward tie.  Exact equality is not rare because
the reward model is evaluated in bfloat16; retaining these rows would be against our design principle of having conspicuous ranking across diverse responses. Hence, we keep only rows which have a high margin of rewards across adjacent ranks and no ties.

\subsection{High-confidence filtering}

For the reward-sorted candidates, define the minimum adjacent margin
\begin{equation}
  \Delta_{\min}(x)
  =\min_{j\in\{1,\ldots,4\}}
  \left(a_{\pi_{x,j}}-a_{\pi_{x,j+1}}\right).
\end{equation}
The experiments in the main paper use only rows with no exact reward tie and
$\Delta_{\min}(x)\geq0.5$.  This conservative rule retains 2,279 prompts:
1,675 train, 235 validation, and 369 test.  Every retained prompt has five
complete responses and a strict total order.  Table~\ref{tab:appendix-data-composition}
reports the final source composition. This dataset should be understood as a high quality filtered data where each prompt is diverse and response ranking makes sense qualitatively.

\begin{table}[t]
  \centering
  \small
  \caption{Composition of the high-confidence dataset used in all main
  experiments.}
  \label{tab:appendix-data-composition}
  \begin{tabular}{lrrrr}
    \toprule
    Source & Train & Validation & Test & Total \\
    \midrule
    Evol-Instruct & 395 & 53 & 82 & 530 \\
    FLAN           & 365 & 59 & 93 & 517 \\
    FalseQA        & 253 & 33 & 57 & 343 \\
    ShareGPT       & 303 & 49 & 62 & 414 \\
    UltraChat      & 359 & 41 & 75 & 475 \\
    \midrule
    Total          & 1,675 & 235 & 369 & 2,279 \\
    \bottomrule
  \end{tabular}
\end{table}

\section{Training Details}
\label{sec: training_details}
\paragraph{Models and parameter-efficient finetuning.}
We train the non-instruction-tuned base models using \texttt{Unsloth 2025.10.1} \citep{unsloth} and
LoRA \citep{hu2021lora}.  We use the following base models for our alignment training: Qwen2.5-0.5B, Qwen2.5-1.5B,
Qwen2.5-7B,Gemma-2-2B, Llama-3.2-3B, Llama-3.1-8B, and Gemma-2-9B. 
We start with base models to isolate the effect and comparison to the alignment method itself, with minimal instruction-based prior knowledge.

For every model, LoRA is applied to the query, key, value, and output
attention projections and to the gate, up, and down MLP projections.  We use
rank 64, scaling $\alpha=64$, zero LoRA dropout, no bias parameters, and
Unsloth gradient checkpointing.  Models are loaded and trained in bfloat16.  Each run uses a single H200 GPU. Hyperparameter details are provided in Table \ref{tab:appendix-training-hparams}. Refer to the code repository \href{https://github.com/rhaldarpurdue/f-RPO}{[link]} for the exact implementations.

\begin{table}[!ht]
  \centering
  \small
  \caption{Training hyperparameters shared across methods unless noted.}
  \label{tab:appendix-training-hparams}
  \begin{tabular}{ll}
    \toprule
    Hyperparameter & Value \\
    \midrule
    Training epochs & 1 \\
    Random seeds & $\{1,2,3,4,5\}$ \\
    Learning rate & $5\times10^{-5}$ \\
    Optimizer & 8-bit AdamW \\
    Weight decay & 0.1 \\
    LR schedule / warmup & cosine / 0.1 fraction \\
    KL penalty coefficient, $\beta$ & 0.1 \\
    Maximum gradient norm & 1.0 \\
    Precision & bfloat16 \\
    LoRA rank / $\alpha$ & 64 / 64 \\
    LoRA target modules & $q,k,v,o$, gate, up, down projections \\
    Ranked-method batch / accumulation & 1 / 16 \\
    Pairwise-DPO batch / accumulation & 10 / 16 \\
    \bottomrule
  \end{tabular}
\end{table}

\section{Evaluation Methodology}
We evaluate two classes of metrics, as described in the following 2 sections, corresponding to rank \& generation quality.
\label{sec: evaluation methodology}
\subsection{Ranking Metrics}
\label{app:ranking-methodology}

We evaluate whether the aligned policy ranks a fixed set of
$K=5$ candidate responses in the same order as the ground-truth
preference ranking. For a candidate response
$y_i=(y_{i,1},\ldots,y_{i,L_i})$, we define its policy score using
length-normalized log-likelihood,
\begin{equation}
    s_i
    =
    \frac{1}{L_i}
    \sum_{t=1}^{L_i}
    \log
    \pi_\theta
    \!\left(
        y_{i,t}\mid x,y_{i,<t}
    \right).
    \label{eq:appendix-candidate-score}
\end{equation}
Length normalization mitigates the systematic preference of summed
log-likelihood toward shorter responses. The predicted rank
$\widehat r_i$ is obtained by sorting $s_i$ in descending order,
while the target rank $r_i\in\{1,\ldots,K\}$ is determined by the
reward-model ordering. Ties are assigned dense ranks.

For each prompt, we compute the following metrics and report their
average across prompts.

\begin{itemize}

\item \textbf{Spearman's rank correlation.}
Our primary global-ranking metric is the Pearson correlation between
the target and predicted rank vectors:
\begin{equation}
    \rho
    =
    \frac{
        \sum_{i=1}^{K}
        (r_i-\bar r)
        (\widehat r_i-\bar{\widehat r})
    }{
        \sqrt{
            \sum_{i=1}^{K}(r_i-\bar r)^2
        }
        \sqrt{
            \sum_{i=1}^{K}
            (\widehat r_i-\bar{\widehat r})^2
        }
    },
    \label{eq:spearman}
\end{equation}
where
$\bar r=K^{-1}\sum_i r_i$ and
$\bar{\widehat r}=K^{-1}\sum_i\widehat r_i$.
Spearman's $\rho\in[-1,1]$ measures monotonic agreement over the
entire candidate ranking, with larger values indicating better
agreement.

\item \textbf{Kendall's $\tau$.}
We additionally report Kendall's $\tau_b$, which measures agreement
through pairwise concordance while accounting for possible ties.
Let $C$ and $D$ denote the numbers of concordant and discordant
candidate pairs, respectively, and let $T_r$ and
$T_{\widehat r}$ denote the numbers of pairs tied only in the target
and predicted rankings. Then
\begin{equation}
    \tau_b
    =
    \frac{C-D}
    {\sqrt{(C+D+T_r)(C+D+T_{\widehat r})}}.
    \label{eq:kendall-tau}
\end{equation}
In the absence of ties, this reduces to
$\tau=(C-D)/\binom{K}{2}$.

\item \textbf{Pairwise accuracy.}
We measure the fraction of ground-truth ordered pairs whose ordering
is recovered by the policy scores:
\begin{equation}
    \mathrm{PairAcc}
    =
    \frac{1}{|\mathcal P_x|}
    \sum_{(i,j)\in\mathcal P_x}
    \mathbf{1}\{s_i>s_j\},
    \qquad
    \mathcal P_x
    =
    \{(i,j):r_i<r_j\}.
    \label{eq:pairwise-accuracy}
\end{equation}
Ground-truth ties, if present, are excluded; the high-confidence test
set used in our experiments contains no such ties.

\item \textbf{Top-1 accuracy.}
Top-1 accuracy measures whether the policy identifies the
ground-truth highest-ranked response:
\begin{equation}
    \mathrm{Top1}
    =
    \mathbf{1}
    \left\{
        \arg\max_i s_i
        =
        \arg\min_i r_i
    \right\}.
    \label{eq:top1-accuracy}
\end{equation}

\item \textbf{Exact ranking match.}
This metric equals one only when the complete predicted ranking
matches the target ranking:
\begin{equation}
    \mathrm{Exact}
    =
    \mathbf{1}
    \left\{
        \widehat{\mathbf r}
        =
        \mathbf r
    \right\}.
    \label{eq:exact-ranking}
\end{equation}

\item \textbf{Rank MAE and RMSE.}
Finally, we measure absolute and squared rank displacement:
\begin{align}
    \mathrm{MAE}
    &=
    \frac{1}{K}
    \sum_{i=1}^{K}
    |\widehat r_i-r_i|,
    \\
    \mathrm{RMSE}
    &=
    \sqrt{
        \frac{1}{K}
        \sum_{i=1}^{K}
        (\widehat r_i-r_i)^2
    }.
\end{align}
Lower values indicate better recovery of the target ranking.

\end{itemize}
\subsection{AlpacaEval 2.0 generation evaluation}
\label{app:alpacaeval-methodology}
For every model, we evaluate the unaligned base model and
aligned checkpoints using the same 805 AlpacaEval \citep{alpaca} instructions.  Generation uses vLLM with greedy decoding
(temperature 0, top-$p=1$), seed 3407, a maximum context length of 4,096, and
at most 1,024 new tokens.  Greedy decoding eliminates sampling variance and
makes differences attributable to the learned policy rather than decoding
noise.

Each aligned model is compared
against outputs from its corresponding unaligned base model.  This
base-relative win rate isolates the utility gained by preference alignment and
is especially interpretable for small base models. Pairwise judgments are produced by
\texttt{gpt-4.1-mini-2025-04-14} with temperature 0 and a structured function
call that must return an ordering of the two anonymized responses.  The judge
is instructed to prioritize correctness, instruction following, relevance,
clarity, and appropriate detail and not to prefer an answer merely because it
is longer.  We report raw win rate.

\newpage


\FloatBarrier
\section{Additional Tables}
\label{app: additional_tables}


\begin{table*}[!ht]
\centering
\small
\setlength{\tabcolsep}{5pt}
\caption{
Expanded ranking performance across alignment method for Qwen2.5-1.5B.
Parentheses denote standard errors over five seeds;
the base model is evaluated deterministically.
Best and second-best aligned results in each column are
\textbf{bold} and \underline{underlined}, respectively.
Shaded rows denote our $f$-RPO variants.
}
\label{tab:scaling-k5-test-qwen2515b}

\resizebox{\textwidth}{!}{%
\begin{tabular}{lccccccc}
\toprule
Method
& Spearman $\rho$ ($\uparrow$)
& Kendall $\tau$ ($\uparrow$)
& Pairwise (\%) ($\uparrow$)
& Top-1 (\%) ($\uparrow$)
& Exact match (\%) ($\uparrow$)
& MAE ($\downarrow$)
& RMSE ($\downarrow$) \\
\midrule

\rowcolor{gray!15}
$f$-RPO-RKL
& \textbf{0.1538 (0.0068)}
& \textbf{0.1316 (0.0058)}
& \textbf{56.58 (0.29)}
& \textbf{27.32 (0.10)}
& \underline{2.82 (0.11)}
& \textbf{1.4298 (0.0088)}
& \textbf{1.7224 (0.0085)} \\

\rowcolor{gray!15}
$f$-RPO-TV
& \underline{0.1288 (0.0035)}
& \underline{0.1079 (0.0026)}
& \underline{55.39 (0.13)}
& \underline{26.72 (0.24)}
& \textbf{2.87 (0.07)}
& \underline{1.4448 (0.0037)}
& \underline{1.7521 (0.0040)} \\

PL-DPO
& 0.0471 (0.0045)
& 0.0441 (0.0036)
& 52.21 (0.18)
& 24.66 (0.30)
& 2.38 (0.05)
& 1.5261 (0.0047)
& 1.8431 (0.0052) \\

Mallows-RCPO
& 0.0394 (0.0111)
& 0.0375 (0.0090)
& 51.88 (0.45)
& 24.12 (0.59)
& 2.38 (0.05)
& 1.5343 (0.0115)
& 1.8533 (0.0122) \\

\midrule
Base
& -0.0431
& -0.0287
& 48.56
& 20.33
& 1.36
& 1.6260
& 1.9557 \\

\bottomrule
\end{tabular}%
}
\end{table*}

\begin{table*}[!ht]
\centering
\small
\setlength{\tabcolsep}{5pt}
\caption{
Expanded ranking performance across alignment method for Qwen2.5-7B.
Parentheses denote standard errors over five seeds;
the base model is evaluated deterministically.
Best and second-best aligned results in each column are
\textbf{bold} and \underline{underlined}, respectively.
Shaded rows denote our $f$-RPO variants.
}
\label{tab:scaling-k5-test-qwen257b}

\resizebox{\textwidth}{!}{%
\begin{tabular}{lccccccc}
\toprule
Method
& Spearman $\rho$ ($\uparrow$)
& Kendall $\tau$ ($\uparrow$)
& Pairwise (\%) ($\uparrow$)
& Top-1 (\%) ($\uparrow$)
& Exact match (\%) ($\uparrow$)
& MAE ($\downarrow$)
& RMSE ($\downarrow$) \\
\midrule

\rowcolor{gray!15}
$f$-RPO-RKL
& \underline{0.4827 (0.0008)}
& \underline{0.3951 (0.0009)}
& \underline{69.76 (0.04)}
& \underline{29.76 (0.20)}
& \textbf{5.09 (0.18)}
& \textbf{1.0363 (0.0012)}
& \underline{1.3261 (0.0018)} \\

\rowcolor{gray!15}
$f$-RPO-TV
& \textbf{0.4858 (0.0005)}
& \textbf{0.3989 (0.0006)}
& \textbf{69.95 (0.03)}
& 28.40 (0.10)
& \underline{4.82 (0.13)}
& \underline{1.0383 (0.0015)}
& \textbf{1.3260 (0.0018)} \\

PL-DPO
& 0.4530 (0.0006)
& 0.3708 (0.0004)
& 68.54 (0.02)
& \textbf{30.46 (0.14)}
& 4.39 (0.16)
& 1.0810 (0.0011)
& 1.3680 (0.0010) \\

Mallows-RCPO
& 0.4674 (0.0023)
& 0.3828 (0.0023)
& 69.14 (0.12)
& \underline{29.76 (0.23)}
& 4.28 (0.10)
& 1.0606 (0.0036)
& 1.3451 (0.0037) \\

\midrule
Base
& 0.4390
& 0.3588
& 67.94
& 29.27
& 2.98
& 1.1003
& 1.3919 \\

\bottomrule
\end{tabular}%
}
\end{table*}

\begin{table*}[!ht]
\centering
\small
\setlength{\tabcolsep}{5pt}
\caption{
 Expanded $f$-divergence ablation for ranked alignment on Qwen2.5-0.5B base model.
Parentheses denote standard errors over five seeds; the base model is
evaluated deterministically.
Best and second-best aligned results in each column are
\textbf{bold} and \underline{underlined}, respectively.
For MAE and RMSE, lower is better.
}
\label{tab:frpo-divergence-k5-all-test}

\resizebox{\textwidth}{!}{%
\begin{tabular}{lccccccc}
\toprule
Divergence
& Spearman $\rho$
& Kendall $\tau$
& Pairwise acc.$\%$.
& Top-1 (\%)
& Exact match
& MAE ($\downarrow$)
& RMSE ($\downarrow$)\\
\midrule

Reverse KL
& \textbf{0.1648 (0.0039)}
& \textbf{0.1340 (0.0031)}
& \textbf{56.70 (0.16)}
& \textbf{28.94 (0.10)}
& \textbf{0.0320 (0.0005)}
& \textbf{1.4131 (0.0053)}
& \textbf{1.7066 (0.0048)} \\

Total variation
& \underline{0.1492 (0.0017)}
& \underline{0.1233 (0.0016)}
& \underline{56.16 (0.08)}
& \underline{28.08 (0.07)}
& 0.0309 (0.0007)
& \underline{1.4333 (0.0019)}
& \underline{1.7252 (0.0020)} \\

KL
& 0.0928 (0.0019)
& 0.0796 (0.0017)
& 53.98 (0.08)
& 26.94 (0.29)
& 0.0298 (0.0000)
& 1.4955 (0.0021)
& 1.7899 (0.0019) \\

Pearson $\chi^2$
& 0.1048 (0.0012)
& 0.0879 (0.0010)
& 54.40 (0.05)
& 27.70 (0.26)
& 0.0304 (0.0010)
& 1.4784 (0.0020)
& 1.7749 (0.0019) \\

Squared Hellinger
& 0.0937 (0.0005)
& 0.0792 (0.0006)
& 53.96 (0.03)
& 27.80 (0.18)
& \underline{0.0314 (0.0018)}
& 1.4862 (0.0011)
& 1.7861 (0.0015) \\

Jensen--Shannon
& 0.1044 (0.0010)
& 0.0881 (0.0011)
& 54.41 (0.05)
& 27.97 (0.16)
& \underline{0.0314 (0.0011)}
& 1.4782 (0.0012)
& 1.7745 (0.0013) \\

\midrule
Base
& 0.0314
& 0.0320
& 51.60
& 23.85
& 0.0244
& 1.5631
& 1.8644 \\

\bottomrule
\end{tabular}%
}
\end{table*}

\FloatBarrier
\newpage
\section{Qualitative Examples}
\label{app:qualitative}

We provide representative AlpacaEval generations from Qwen2.5-1.5B
aligned via different methods. The examples illustrate qualitative
differences between alignment objectives.

\begin{table}[!ht]
\centering
\caption{
Example 1: Inappropriate refusal on a benign question.
}
\label{tab:qualitative-example1}
\scriptsize
\setlength{\tabcolsep}{4pt}
\renewcommand{\arraystretch}{1.05}

\begin{adjustbox}{
    max totalsize={\textwidth}{0.80\textheight},
    keepaspectratio,
    center
}
\begin{tabular}{
    >{\raggedright\arraybackslash\bfseries}p{0.13\textwidth}
    >{\raggedright\arraybackslash}p{0.81\textwidth}
}
\toprule

Prompt
&
\emph{Why do a lot of Scientists not believe in God or Satan?}
\\
\midrule

Base
&
\textit{Excerpt:}
``I'm sorry, but I'm not able to provide answers to personal or
philosophical questions.''

\smallskip
\textbf{Generation:}
I'm sorry, but I'm not able to provide answers to personal or
philosophical questions. If you have any other questions that I can
assist you with, please feel free to ask.
\\
\midrule

PL-DPO
&
\textit{Excerpt:}
``I'm sorry, but I can't assist with that.''

\smallskip
\textbf{Generation:}
I'm sorry, but I can't assist with that.
\\
\midrule

Mallows-RCPO
&
\textit{Excerpt:}
Explains that scientists hold diverse religious and philosophical
beliefs and discusses several possible factors.

\smallskip
\textbf{Generation:}
Scientists, like any other human beings, have their own beliefs and
perspectives on the existence of God or Satan. While some scientists
may hold religious beliefs, others may not, and some may even be
atheists or agnostics. The reasons for this diversity of beliefs can
be attributed to several factors:

\textbf{1. Scientific Method and Objectivity:}
The scientific method is based on empirical evidence and the pursuit
of truth through observation and experimentation. This approach
inherently rejects supernatural explanations, as they are not subject
to empirical testing.

\textbf{2. Philosophical and Theological Differences:}
Different scientific disciplines may be influenced by different
philosophical and theological traditions.

\textit{[\ldots intermediate discussion omitted for space \ldots]}

In summary, the reasons why a lot of scientists do not believe in God
or Satan are complex and multifaceted.
\\
\midrule

$f$-RPO-RKL
&
\textit{Excerpt:}
``The belief in God or Satan is a deeply personal and subjective
matter \ldots'' and discusses empirical, philosophical, and cultural
factors.

\smallskip
\textbf{Generation:}
The belief in God or Satan is a deeply personal and subjective matter,
influenced by a variety of factors including cultural, religious,
philosophical, and personal experiences. While some scientists may not
believe in God or Satan, it's important to understand that this is not
a universal or consistent stance among scientists.

\textbf{1. Scientific Method and Evidence:}
Scientists are trained to seek evidence and test hypotheses through
empirical observation and experimentation.

\textbf{2. Philosophical Skepticism:}
Many scientists are skeptical of claims that cannot be empirically
tested or verified.

\textit{[\ldots intermediate discussion omitted for space \ldots]}

The scientific community is diverse, and there is no one-size-fits-all
approach to belief.
\\
\midrule

$f$-RPO-TV
&
\textit{Excerpt:}
``Scientists, like any other human beings, have a wide range of beliefs
and worldviews \ldots'' and discusses methodological naturalism,
empirical evidence, and philosophical skepticism.

\smallskip
\textbf{Generation:}
Scientists, like any other human beings, have a wide range of beliefs
and worldviews, and their views on God or Satan can vary widely.

\textbf{1. Philosophical and Theological Skepticism:}
Many scientists are deeply skeptical of religious claims and the
supernatural.

\textbf{2. Methodological Naturalism:}
Scientists often adhere to methodological naturalism, the view that
phenomena in the natural world can be explained by natural causes
without requiring supernatural explanations.

\textbf{3. Scientific Method and Empirical Evidence:}
Scientists rely on empirical evidence and the scientific method to
understand the natural world.

\textit{[\ldots intermediate discussion omitted for space \ldots]}

In summary, while scientists are not a monolithic group, many of them
do not believe in God or Satan due to a combination of philosophical,
theological, and scientific reasons.
\\

\bottomrule
\end{tabular}
\end{adjustbox}
\end{table}
\begin{table}[!ht]
\centering
\caption{
Example 2: Incomplete answer and repetitive degeneration.
}
\label{tab:qualitative-example2}
\scriptsize
\setlength{\tabcolsep}{4pt}
\renewcommand{\arraystretch}{1.05}

\begin{adjustbox}{
    max totalsize={\textwidth}{0.80\textheight},
    keepaspectratio,
    center
}
\begin{tabular}{
    >{\raggedright\arraybackslash\bfseries}p{0.13\textwidth}
    >{\raggedright\arraybackslash}p{0.81\textwidth}
}
\toprule

Prompt
&
\emph{Is JavaScript a back-end or front-end language?}
\\
\midrule

Base
&
\textit{Excerpt:}
``javascript is a front end language.''

\smallskip
\textbf{Generation:}
javascript is a front end language.
\\
\midrule

PL-DPO
&
\textit{Excerpt:}
States that JavaScript is primarily front-end, later acknowledges
server-side use, and then repeats the same passage multiple times.

\smallskip
\textbf{Generation:}
is javascript back end or front end language? JSName is a scripting
language that is used for web development. It is a front-end language,
which means it is used to create the user interface of a web
application. JavaScript is not a back-end language, which refers to the
part of a web application that is responsible for processing and
managing data.

JavaScript is primarily used for client-side scripting, which means it
is executed on the user's web browser. It is used to add interactivity
and dynamic behavior to web pages, such as form validation, animations,
and user interactions. JavaScript is also used for server-side
scripting, but it is not the primary language used for server-side
development.

In summary, JavaScript is a front-end language used for web
development, while back-end languages are used for server-side
development. JavaScript is not a back-end language, but it can be used
for server-side scripting in conjunction with other languages like
Node.js.

Can you give me some examples of JavaScript frameworks that are
commonly used for back-end development?

\textit{[\ldots the preceding passage and question are repeated
multiple times until the generation limit \ldots]}
\\
\midrule

Mallows-RCPO
&
\textit{Excerpt:}
``JavaScript is a versatile programming language that can be used for
both the back end and the front end of web applications.''

\smallskip
\textbf{Generation:}
JavaScript is a versatile programming language that can be used for
both the back end and the front end of web applications. Here's a
breakdown of its use in each:

\textbf{Front End (Client-Side).}
JavaScript runs in the user's browser and is used for dynamic content,
interactivity, event handling, data binding, and asynchronous requests.

\textbf{Back End (Server-Side).}
JavaScript can also run on the server through Node.js. Express.js can
be used to create web applications and RESTful APIs, while server-side
JavaScript can handle requests, process data, and interact with
databases.

\textit{[\ldots additional examples of React, Angular, Vue.js,
Node.js, Express.js, and server-side rendering omitted for space
\ldots]}

In summary, JavaScript is a versatile language that can be used for
both the front end and the back end of web applications, depending on
the context and the framework or technology being used.
\\
\midrule

$f$-RPO-RKL
&
\textit{Excerpt:}
``JavaScript is a versatile programming language that can be used for
both the front-end and back-end of web applications.''

\smallskip
\textbf{Generation:}
JavaScript is a versatile programming language that can be used for
both the front-end and back-end of web applications. Here's a brief
overview of how it's used in each context:

\textbf{Front-End (Client-Side).}
Its purpose is to create the user interface and handle user
interactions, with technologies and frameworks such as HTML, CSS,
React, Angular, Vue.js, and jQuery.

\textbf{Back-End (Server-Side).}
It can be used to handle server-side logic, database interactions, and
other server-side tasks, particularly through Node.js and Express.js.

\textit{[\ldots a longer worked example containing HTML and Node.js
code is omitted for space \ldots]}

In summary, JavaScript is a versatile language that can be used for
both front-end and back-end development, depending on the specific
requirements of the application.
\\
\midrule

$f$-RPO-TV
&
\textit{Excerpt:}
``Javascript is a programming language that can be used for both the
back end and the front end of web applications.''

\smallskip
\textbf{Generation:}
Javascript is a programming language that can be used for both the
back end and the front end of web applications.

\textbf{Front End (Client-Side).}
JavaScript is used for handling user interactions, rendering dynamic
content, and managing application behavior. Frameworks and libraries
such as React, Angular, and Vue.js are commonly used for front-end
development.

\textbf{Back End (Server-Side).}
JavaScript can also be used on the back end through Node.js and
frameworks such as Express.js, including for server-side rendering and
APIs.

\textit{[\ldots additional examples omitted for space \ldots]}

In summary, while JavaScript is a versatile language that can be used
for both front end and back end development, it is more commonly used
for the front end due to its simplicity and ease of use in creating
dynamic and interactive user interfaces.
\\

\bottomrule
\end{tabular}
\end{adjustbox}
\end{table}
\begin{table}[t]
\centering
\caption{
Example 3: Degeneration under prior ranked objectives.
}
\label{tab:qualitative-example3}
\scriptsize
\setlength{\tabcolsep}{4pt}
\renewcommand{\arraystretch}{1.05}

\begin{adjustbox}{
    max totalsize={\textwidth}{0.78\textheight},
    keepaspectratio,
    center
}
\begin{tabular}{
    >{\raggedright\arraybackslash\bfseries}p{0.13\textwidth}
    >{\raggedright\arraybackslash}p{0.81\textwidth}
}
\toprule

Prompt
&
\emph{What breed dog is smallest?}
\\
\midrule

Base
&
\textit{Excerpt:}
Produces an unrelated question.

\smallskip
\textbf{Generation:}
What is the name of the character who is a former soldier?
\\
\midrule

PL-DPO
&
\textit{Excerpt:}
Echoes the prompt and degenerates into repeated non-English tokens.

\smallskip
\textbf{Generation:}
What breed dog is smallest?

\textit{[\ldots a non-English token is repeated until the generation
limit \ldots]}
\\
\midrule

Mallows-RCPO
&
\textit{Excerpt:}
Changes the question and then degenerates into repeated non-English
tokens.

\smallskip
\textbf{Generation:}
What is the name of the dog that is a cross between a beagle and a
terrier?

\textit{[\ldots a non-English token is repeated until the generation
limit \ldots]}
\\
\midrule

$f$-RPO-RKL
&
\textit{Excerpt:}
``The breed of dog that is considered the smallest is the Chihuahua.''

\smallskip
\textbf{Generation:}
The breed of dog that is considered the smallest is the Chihuahua.
The Chihuahua is a small dog breed that originated in Mexico,
specifically in the state of Chihuahua. It is known for its small size,
which can range from 6 to 10 inches (15 to 25 centimeters) in height
and weighing between 3 to 10 pounds (1.4 to 4.5 kilograms).

The Chihuahua is a popular breed for many reasons, including its small
size, friendly nature, and the fact that it can be trained to perform
various tasks, such as being a companion, a watchdog, or even a
therapy dog.

Other small dog breeds include the Maltese, Pomeranian, and Yorkshire
Terrier, but the Chihuahua is the smallest of these breeds.
\\
\midrule

$f$-RPO-TV
&
\textit{Excerpt:}
``The smallest dog breed is the Chihuahua.''

\smallskip
\textbf{Generation:}
The smallest dog breed is the Chihuahua. However, it's important to
note that the smallest Chihuahua can be quite small, with some
individuals being as small as 3--4 inches (7.6--10.2 cm) in height
and weighing less than 1 pound (0.45 kg).

For other breeds, the smallest can vary widely depending on the breed
and the individual dog. For example, the response discusses the
Pomeranian, Bichon Frise, and Pug as other small breeds.

\textit{[\ldots intermediate breed comparisons omitted for space
\ldots]}

These are just the extremes, and there are many other breeds that can
be even smaller. It's always a good idea to consult with a veterinarian
or a reputable breeder to get accurate information about the size of a
specific dog breed.
\\

\bottomrule
\end{tabular}
\end{adjustbox}
\end{table}
\begin{table}[t]
\centering
\caption{
Example 4: Distinguishing an abstract data type from its implementation.
}
\label{tab:qualitative-example4}
\scriptsize
\setlength{\tabcolsep}{4pt}
\renewcommand{\arraystretch}{1.05}

\begin{adjustbox}{
    max totalsize={\textwidth}{0.80\textheight},
    keepaspectratio,
    center
}
\begin{tabular}{
    >{\raggedright\arraybackslash\bfseries}p{0.13\textwidth}
    >{\raggedright\arraybackslash}p{0.81\textwidth}
}
\toprule

Prompt
&
\emph{Is queue an ADT or a data structure?}
\\
\midrule

Base
&
\textit{Excerpt:}
Echoes the prompt and degenerates into token repetition.

\smallskip
\textbf{Generation:}
is queue an ADT or a data structure

çıçççççççççççççççççççççç

\textit{[\ldots repeated until the generation limit \ldots]}
\\
\midrule

PL-DPO
&
\textit{Excerpt:}
Produces repeated numbering instead of answering.

\smallskip
\textbf{Generation:}
\#\# 1.1.1.1.1.1.1.1.1.1.1.1.1.1.1.1.1.1.1.1.1.1

\textit{[\ldots ``1.1'' pattern repeated until the generation limit
\ldots]}
\\
\midrule

Mallows-RCPO
&
\textit{Excerpt:}
Repeats the prompt interleaved with nonsensical tokens.

\smallskip
\textbf{Generation:}
is queue an ADT or a data structureiszmq

iszmq

is queue an ADT or a data structureiszmq

iszmq

\textit{[\ldots the same pattern is repeated throughout the
generation \ldots]}
\\
\midrule

$f$-RPO-RKL
&
\textit{Excerpt:}
Correctly distinguishes an ADT from a concrete implementation, but
later falls into repeated prompt text.

\smallskip
\textbf{Generation:}
The terms ``ADT'' and ``data structure'' are often used
interchangeably in the context of programming and computer science,
but they have slightly different meanings.

\textbf{ADT (Abstract Data Type).}
An ADT is a high-level abstraction that defines a set of operations
that can be performed on a collection of data. It provides a way to
describe behavior without specifying implementation details.

For example, a stack is an ADT that defines operations such as
\texttt{push}, \texttt{pop}, \texttt{peek}, and \texttt{isEmpty}.

\textbf{Data Structure.}
A data structure is a specific implementation of an ADT. For example,
a stack can be implemented using an array or a linked list.

In summary, an ADT is a high-level abstraction that defines a set of
operations, while a data structure is a specific implementation of an
ADT.

Regarding your specific question, queue and stack can be viewed as
ADTs defining a set of operations.

\textit{[\ldots the prompt ``is queue an ADT or a data structure'' is
then repeated many times until the generation limit \ldots]}
\\
\midrule

$f$-RPO-TV
&
\textit{Excerpt:}
Explains FIFO behavior and the operations of a queue, although its
final wording is internally inconsistent.

\smallskip
\textbf{Generation:}
Certainly! Let's break down the concepts of a queue and a data
structure, and then we can discuss whether a queue is an ADT
(Abstract Data Type) or a data structure.

\textbf{Queue as a Data Structure.}
A queue is a linear data structure that follows the First-In-First-Out
(FIFO) principle.

\textbf{Queue as an ADT.}
An Abstract Data Type is a mathematical model that defines its
operations and behavior independently of implementation details. For a
queue these operations include \texttt{enqueue},
\texttt{dequeue}, \texttt{isEmpty}, and \texttt{size}.

The response then states that a queue is an ADT because it defines a
set of operations, but concludes:

``Therefore, a queue is a data structure, not an ADT. The operations
that a queue can perform are defined in terms of the queue's behavior,
making it an ADT.''
\\

\bottomrule
\end{tabular}
\end{adjustbox}
\end{table}
\begin{table}[t]
\centering
\caption{
Example 5: Unnecessary refusal versus direct instruction following.
}
\label{tab:qualitative-example5}
\scriptsize
\setlength{\tabcolsep}{4pt}
\renewcommand{\arraystretch}{1.05}

\begin{adjustbox}{
    max totalsize={\textwidth}{0.80\textheight},
    keepaspectratio,
    center
}
\begin{tabular}{
    >{\raggedright\arraybackslash\bfseries}p{0.13\textwidth}
    >{\raggedright\arraybackslash}p{0.81\textwidth}
}
\toprule

Prompt
&
\emph{Come up with some search queries on Google about coding stuff.}
\\
\midrule

Base
&
\textit{Excerpt:}
Unnecessarily refuses the request.

\smallskip
\textbf{Generation:}
I'm sorry, but I cannot provide assistance with that.
\\
\midrule

PL-DPO
&
\textit{Excerpt:}
Repeatedly refuses because it cannot browse the internet.

\smallskip
\textbf{Generation:}
I'm sorry, but I can't assist with that. I'm an AI language model and
I don't have the capability to browse the internet or perform
searches. If you have any other questions or need assistance with a
different topic, feel free to ask!

\textit{[\ldots the same refusal is repeated many times until the
generation limit \ldots]}
\\
\midrule

Mallows-RCPO
&
\textit{Excerpt:}
Refuses the benign request.

\smallskip
\textbf{Generation:}
I'm sorry, but I can't assist with that.
\\
\midrule

$f$-RPO-RKL
&
\textit{Excerpt:}
Provides directly usable coding-related search queries.

\smallskip
\textbf{Generation:}
Certainly! Here are some search queries related to coding, coding
assistance, and coding-related topics:

1. ``Best coding tutorials for beginners''

2. ``How to become a software engineer in 1 year''

3. ``Top 10 coding interview questions''

4. ``Best online coding courses for beginners''

5. ``How to learn Python in 3 months''

6. ``Best coding books for beginners''

7. ``How to improve coding speed''

8. ``Best coding resources for Java developers''

9. ``How to learn JavaScript in 6 months''

10. ``Best coding challenges for beginners''

\textit{[\ldots ten additional coding-related queries omitted for
space \ldots]}

These queries should help you find a variety of resources and
tutorials tailored to different aspects of coding and coding
assistance.
\\
\midrule

$f$-RPO-TV
&
\textit{Excerpt:}
Organizes search queries by programming language and topic.

\smallskip
\textbf{Generation:}
Certainly! Here are some search queries that you might find useful for
learning and exploring coding topics:

\textbf{1. Python for Beginners}
\begin{itemize}
    \item ``Learn Python in 24 hours''
    \item ``Python for Data Science''
    \item ``Python for Web Development''
\end{itemize}

\textbf{2. JavaScript Fundamentals}
\begin{itemize}
    \item ``JavaScript for Beginners''
    \item ``JavaScript for Web Developers''
    \item ``JavaScript for Front-End Developers''
\end{itemize}

\textbf{3. Java Programming}
\begin{itemize}
    \item ``Java for Beginners''
    \item ``Java for Android Development''
    \item ``Java for Web Development''
\end{itemize}

\textit{[\ldots additional categories covering C++, data structures
and algorithms, machine learning, SQL, React, AWS, Git, Django, and
Angular omitted for space \ldots]}

These queries should help you find a variety of resources and
tutorials tailored to your interests in coding.
\\

\bottomrule
\end{tabular}
\end{adjustbox}
\end{table}

\FloatBarrier

\end{document}